\documentclass[twoside]{article}
\usepackage[dvips]{graphicx}
\usepackage{amsmath}
\usepackage{amssymb}
\newcommand{\argmin}{\mathop{\rm argmin}}
\let\V=\boldsymbol
\let\C=\mathcal
\newcommand{\argmax}{\mathop{\rm argmax}}
\newtheorem{remark}{Remark}
\newtheorem{example}{Example}
\newtheorem{definition}{Definition}
\newtheorem{proposition}{Proposition}
\newtheorem{proof}{Proof}
\newtheorem{lemma}{Lemma}
\newtheorem{theorem}{Theorem}

\DeclareMathOperator*{\I}{I}

\DeclareMathOperator*{\sign}{sgn}

\newcommand{\vtheta}{\V{\theta}}
\newcommand{\vphi}{\V{\phi}}
\newcommand{\vhtheta}{\V{\hat{\theta}}}

\newcommand{\nmean}{\frac{1}{n}\sum_{i=1}^n}
\newcommand{\nsum}{\sum_{i=1}^n}
\newcommand{\sumx}{\sum_{\V{x}\in\C{X}}}
\newcommand{\sumz}{\sum_{\V{x}\in\C{Z}}}

\newcommand{\bq}{\bar{q}}

\newcommand{\br}[1]{\left< #1 \right>}

\newcommand{\tra}{\tilde{r}_{\alpha,\V{\theta}}}
\newcommand{\trad}{\tilde{r}_{\alpha',\V{\theta}}}
\newcommand{\trae}{\tilde{r}_{\alpha,\vtheta,\eta}}
\newcommand{\traed}{\tilde{r}_{\alpha',\vtheta,\eta}}
\newcommand{\tp}{\tilde{p}}
\newcommand{\ra}{r_{\alpha,\V{\theta}}}
\newcommand{\rad}{r_{\alpha',\V{\theta}}}
\newcommand{\bqz}{\bar{q}_{\V{\theta}_0}}
\newcommand{\ptra}{\frac{\partial \tilde{r}_{\alpha,\V{\theta}}}{\partial \V{\theta}}}
\newcommand{\ptrad}{\frac{\partial \tilde{r}_{\alpha',\V{\theta}}}{\partial \V{\theta}}}

\newcommand{\petra}{\frac{\partial \tilde{r}_{\alpha,\V{\theta}}}{\partial \epsilon}}
\newcommand{\petrad}{\frac{\partial \tilde{r}_{\alpha',\V{\theta}}}{\partial \epsilon}}

\newcommand{\vmu}{\V{\mu}}

\usepackage{color}

\usepackage{natbib}
\usepackage{authblk}
\usepackage{lastpage}

\begin{document}
\title{
Parameter Estimation for Unnormalized Discrete Models via Empirically Localized 
Deformed Bregman Divergence}
\author[1]{Takashi Takenouchi} 
\affil[1]{National Graduate Institute for Policy Studies (GRIPS) and Riken AIP\\
t-takenouchi@grips.ac.jp        }


\maketitle

\begin{abstract}
Estimation of parameter of probabilistic models is an important task in the field of machine learning.
For models of discrete variables, calculation of the normalization constant of model is sometimes 
difficult and a lot of researches have been done to avoid the calculation of the normalization constant.
In this paper, we tackle with the difficulty by combining
a technique of empirical localization and a deformed Bregman divergence.
The technique of empirical localization makes it possible to drastically reduce computational cost of 
the calculation of the normalization constant, and 
in addition, appropriate choice of the deformation for the Bregman divergence
can invest the proposed estimator with
various kinds of favorable statistical properties, such as efficiency or robustness against
outlier noise. 
\end{abstract}

\begin{center}
 {\bf Keywords}
\end{center}
 Bregman divergence, Unnormalized model, Asymptotic efficiency

  \section{Introduction}  
Parameter estimation for discrete probabilistic models is a fundamental challenge in machine learning. 
Highly expressive undirected graphical models, 
such as Boltzmann machines
\cite{hinton1986learning,AckleyHS85,amari1992information}
 and restricted Boltzmann machines (RBMs)
 \cite{hinton2010practical,hinton2012better},
are widely used to represent complex dependencies over discrete states. 
These models are typically defined as
$q_{\vtheta}(\V{x}) = \frac{\bar{q}_{\vtheta}(\V{x})}{Z_{\vtheta}}$
where $\bar{q}_{\vtheta} (\V{x}) = \exp(\psi_{\vtheta} (\V{x}))$ is unnormalized model and
$Z_{\vtheta} = \sum_{\V{x} \in \mathcal{X}} \bq_{\vtheta} (\V{x})$
is normalization constant (partition function).
A major computational bottleneck in applying maximum likelihood estimation (MLE) 
to these models lies in the evaluation of the partition function
$Z_{\vtheta}$.
For high-dimensional discrete spaces $\mathcal{X} = \{+1, -1\}^d$, the exact calculation of $Z_{\vtheta}$ 
requires an exhaustive summation over $2^d$ states, 
making it computationally intractable.
To circumvent this bottleneck, several approximation paradigms have been proposed. 
One prominent strategy relies on sampling-based approximations. 
For instance, {Contrastive Divergence} (CD) \cite{hinton2002tpe} 
avoids the explicit calculation of $Z_{\vtheta}$ 
by using brief Markov Chain Monte Carlo (MCMC) trajectories starting from the empirical distribution. 
Although the CD has achieved notable empirical success, it lacks strong guarantees of statistical consistency 
because the sampling error does not necessarily vanish unless the Markov chain fully converges.
Noise Contrastive Estimation (NCE) trains the unnormalized model 
by considering a binary classification task between the empirical distribution
and a known reference distribution.
This clever formulation bypasses the computationally expensive normalization constant
and can construct a consistent estimator
\cite{gutmann2010noise}.
An alternative strategy focuses  on local statistics as density shapes.
For continuous variables, {Score Matching} \cite{hyvarinen2005estimation} 
matches derivative of the log-density (the score) of model with that of the empirical distribution, completely bypassing $Z_{\vtheta}$. 
Under continuous Langevin dynamics, 
the score matching can be viewed as a limiting case of contrastive divergence
\cite{hyvarinen2007connections}. 
In discrete spaces, the score matching has been extended to contrast probability of a state with its ``neighbors'' by coordinate-wise variable flipping 
\cite{hyvarinen2007some,gutmann2012bregman,dawid2012proper,
gutmann2012bregman}.
For a specific class of models on discrete space,
\cite{takenouchi2015is} proposed  an asymptotically consistent estimator
with the Itakura-Saito distance. 
{Minimum Probability Flow} (MPF) is a dynamics-based estimator and 
considers a continuous-time Markov chain 
to transition probability mass from the empirical distribution to the model distribution 
\cite{ICML2011Sohl-Dickstein_480}. 
The MPF does not require MCMC sampling or normalization constant calculations, and 
ensures statistical consistency by establishing ergodicity over the discrete states. 
Mathematically, discrete score matching, MPF, and their variants can be unified under the elegant framework of 
\emph{Minimum Stein Discrepancy} (MSD) \cite{barp2019minimum}, 
which derives the \emph{Fisher Divergence}
\cite{lyu2012interpretationgeneralizationscorematching} 
as a special case. 
In continuous settings, similar paradigms for bypassing the partition function have also inspired modern generative modeling, 
such as \emph{Flow Matching} \cite{lipmanflow},
which trains continuous-time flows by matching vector fields. 
However, extension of continuous flow matching or score-based generative frameworks for high-dimensional discrete 
domains is non-trivial due to the non-differentiable nature of discrete state transitions.

In this paper, we focus on a computationally feasible localization paradigm,
{empirical localization} \cite{JMLR:v18:15-596}.
Rather than considering transition dynamics over states,
we utilize e-mixture \cite{AmariNagaoka00} of the model 
and the empirical distribution  (or empirically localized models). 
A prior work ~\cite{JMLR:v18:15-596}
combined localized unnormalized discrete models 
with a homogeneous $\gamma$-divergence,
yielding asymptotically consistent and efficient estimator without calculating $Z_{\vtheta}$. 
However, the method is restricted to the homogeneous $\gamma$-divergences
and its statistical flexibility is limited.
To resolve the limitation, 
we propose a highly flexible parameter estimation framework,
integrating Deformed Bregman Divergence 
with the empirical localization. 
Furthermore, we extend our framework to a Bayesian-like MAP estimator
to incorporate prior domain knowledge into the computationally efficient estimation with the unnormalized model.

The main contributions of this paper are summarized as follows:
\begin{itemize}
    \item \textbf{A Generalized Unnormalized Estimator:} 
We establish a parameter estimation framework for discrete probabilistic models that completely avoids calculating $Z_{\vtheta}$ or performing expensive MCMC sampling.
    \item \textbf{Tunable Statistical Trade-offs:} We prove that our proposed estimator is Fisher consistent regardless of the choice of 
	  divergence and deformation. 
	  Furthermore, we show that adjusting the deformation functions allows us to achieve either asymptotic efficiency equivalent to MLE
	  or B-robustness (bounded gross-error sensitivity) against outlier noise.
    \item \textbf{Bayesian-like MAP Extension:} We generalize the estimator with the empirical localization to incorporate prior domain knowledge, 
	  providing a computationally feasible MAP-like estimator.
\end{itemize}

  Basic setups are described in Section 2.
  In Section 3, we proposed the estimator and
  investigate statistical properties such as efficiency or robustness.
  In Section 4, we refer a relationship between the proposed estimator and 
  some related works.
  In Section 5, we numerically verify performance of the proposed estimator and conclude
  in Section 6.

\section{Deformed Bregman divergence and Empirical localization}
In this section, we introduce definition of deformed Bregman divergence and the framework of empirical localization. 

 \subsection{Setup}

  Let $\C{X}$ be a discrete space.
Typical examples of the discrete space are
 $\{+1,-1\}^d$ or the set of natural numbers $\{1,2,\ldots\}$.
  The bracket $\br{f}$ for a real-valued function $f$ on $\C{X}$ denotes 
sum of $f(\V{x})$ over $\C{X}$
  {\it i.e.}, $\br{f}=\sum_{\V{x}\in\mathcal{X}}f(\V{x})$. 
 In this paper, we consider a problem of  parameter estimation
 of a probabilistic model $\bq_{\vtheta}(\V{x})$
 on $\C{X}$ that is written as
 \begin{equation}
  \bq_{\vtheta}(\V{x})=\frac{q_{\vtheta}(\V{x})}{Z_{\vtheta}} 
   \label{normalized.model}
 \end{equation} 
 where $\vtheta$ is an $m$-dimensional vector of parameters,
 $q_{\vtheta}(\V{x})$ is an unnormalized model and  
 $Z_{\vtheta}=\br{q_{\vtheta}}$ is normalization constant.
  The restriction $\br{ q_{\vtheta}}=\sumx q_{\vtheta}(\V{x})=1$ does not necessarily hold for unnormalized models and 
  typically calculation of the normalization constant $Z_{\vtheta}$ is computationally intractable.
  Throughout the paper, we assume, without loss of generality, that the unnormalized model $q_{\vtheta}(\V{x})$
  is written as
  \begin{align}
   q_{\vtheta}(\V{x})=\exp(\psi_{\vtheta}(\V{x})), 
   \label{pseudo.model}
  \end{align}
  where $\psi_{\vtheta}(\V{x})$ be an arbitrary function on $\C{X}$, parameterized by $\vtheta$
  and differentiable with respect to $\vtheta$.
  Note that
  by setting $\psi_{\vtheta}(\V{x})$ as $\psi_{\vtheta}(\V{x})-\log
   Z_{\vtheta}$,
   the normalized model \eqref{normalized.model} can be represented as
    \eqref{pseudo.model}.
    Several examples are given below.
    \begin{example}
     \label{example.bel}
     A function $\psi_{\theta}(x)=\theta x$ is associated with
     Bernoulli distribution on $\C{X}=\{+1,-1\}$.
    \end{example}
 \begin{example}
  \label{example.boltzmann}
  A function
  $\psi_{\vtheta,k}(\V{x})=
  (x_1,\ldots,x_d,x_1x_2,\ldots,x_{d-1}x_d,x_1x_2x_3,\ldots)\vtheta$
  in which monomials of degree up to $k$ appear, 
  corresponds to
  a $k$-th order Boltzmann machine
  \cite{hinton1986learning,sejnowski1986higher}.
  \end{example}
 \begin{example}
  Let $\V{x}_o\in \{+1,-1\}^{d_1}$ and $\V{x}_h\in \{+1,-1\}^{d_2}$ be an
  observed vector and hidden vector, respectively, and  
  $\V{x}=\left(\V{x}_o^T,\V{x}_h^T\right)\in 
  \{+1,-1\}^{d_1+d_2}$ be a concatenated vector, where
  $T$ indicates transpose.
  The Boltzmann machine with hidden variables is defined as
  $q_{h,\vtheta}(\V{x}_o)=\exp(\psi_{h,\vtheta}(\V{x}_o))$ where
  the function $\psi_{h,\vtheta}(\V{x}_o)$ is 
    $\psi_{h,\vtheta}(\V{x}_o)=
    \log \sum_{\V{x}_h}\exp(\psi_{\vtheta,2}(\V{x}))$
  and $\sum_{\V{x}_h}$ denotes summation with respect to the hidden variable $\V{x}_h$. 
 \end{example}

   \begin{example}\label{ex.poisson}
    For Poisson distribution $p(x)=\frac{\theta^x e^{-\theta}}{x!}
    =\exp\left(x\log \theta -\log x! -\theta\right)$
    on $\{0,1,2,\ldots\}$,
    a function $\psi_{\vtheta}(x)$ is written as $x\log \theta-\log x!$.
    \end{example}

\begin{example}\label{ex.gaussian}
 Multivariate Lattice Gaussian Distribution on $\mathbb{Z}^p$ is defined by
\begin{align*}
p(\V{x}) \propto \exp\left(-\frac{1}{2}(\V{x}-\V{\mu})^T \Sigma^{-1} (\V{x}-\V{\mu})\right)
\end{align*}
where $\mathbb{Z}^p$ is the integer lattice, $\V{\mu}$ is the mean vector, and
$\Sigma$ is a covariance matrix.
\end{example}

     Let $\C{D}=\{\V{x}_i\}_{i=1}^n$ be a dataset generated from an underlying distribution $p(\V{x})$.
     and $\C{Z}$ be a set of unique patterns in the dataset $\C{D}$. 
     An empirical distribution $\tilde{p}(\V{x})$ associated with the
  dataset $\C{D}$ is defined as
  \begin{equation}
   \tilde{p}(\V{x})=
    \begin{cases}
     \frac{n_{\V{x}}}{n} & \V{x}\in \C{Z},\\
     0 & \mbox{otherwise},
    \end{cases}
    \label{empirical.distribution}
  \end{equation}
  where $n_{\V{x}}$ is the empirical count of pattern $\V{x}$ in the dataset
  $\C{D}$.
  MLE is a typical choice to estimate the parameter $\vtheta$, 
      \begin{align*}
       \vhtheta_{\mathrm{mle}}=\argmax_{\vtheta}L(\vtheta),
      \end{align*}
     where
     $
      L(\vtheta)=\nmean \log \bq_{\vtheta}(\V{x}_i)
      =\br{\tilde{p}\log\bq_{\vtheta}}
      =\br{\tilde{p}\psi_{\vtheta}}-
      \br{\log Z_{\vtheta}}
      $
     is the log-likelihood of $\vtheta$ with the normalized model $\bq_{\vtheta}$. 
     MLE is asymptotically consistent and efficient but does not
     have an explicit solution in general.
     Then 
     the maximization of the log-likelihood function can be computationally demanding for normalized
     models on huge discrete sample spaces.
     In fact,
     the gradient of $L(\vtheta)$ is written as
     $\frac{\partial L}{\partial \vtheta}=\br{\tilde{p}\psi_{\vtheta}'}-\br{\bq_{\vtheta}\psi_{\vtheta}'}$
     where $\psi_{\vtheta}'=\frac{\partial \psi_{\vtheta}}{\partial
     \vtheta}$ and
     the second term $\br{\bq_{\vtheta}\psi_{\vtheta}'}$ is the expectation of $\psi_{\vtheta}'$
     with respect to the current model $\bq_{\vtheta}$.
     This calculation requires large computational cost because of the calculation of the normalization constant, while
     the first term $\br{\tilde{p}\psi_{\vtheta}'}$ is just an empirical mean and is easily calculated.

     \subsection{Deformed Bregman divergence}
     As a preparation for a proposed  estimator without the calculation of the normalization constant,
     we define a deformed Bregman divergence
     for two arbitrary positive measures $p,q$ on $\C{X}$:
     \begin{definition}
      Let $U$ be a strictly convex generating function and $f$ be a monotonically increasing
   function. Then the deformed Bregman divergence for $p,q$
   is defined as
  \begin{align}
    D(p,q;U,f)&=
   \br{\Phi_{U,f}(p,q)}
      \label{deformed.bregman}
  \end{align}
   where 
   \begin{align}
   \Phi_{U,f}(p,q)=U(f(q))-U(f(p))-U'(f(p))\left(f(q)-f(p)\right).   
    \nonumber
    \end{align}
     \end{definition}
  Note that the function $\Phi_{U,f}$ is always non-negative because of the
  convexity of $U$.

   \begin{example}
    The conventional Bregman divergence is derived with a function $f(z)=z$.
   \end{example}
  
\begin{example}
  If we employ $f=(U')^{-1}$ (an inverse function of $U'$), \eqref{deformed.bregman} reduces
  to the statistical version of Bregman divergence~\cite{Murata_etal02},
  \begin{align*}
   D(\tilde{p},q;U,f=(U')^{-1})
   &=
    Const+\br{U(f(q))-\tilde{p}f(q)}
   \\
   &=
   Const+\br{U(f(q))}-\nmean f(q(\V{x}_i)).
   \end{align*}
\end{example}
For this divergence, we can directly plugged-in the empirical distribution $\tilde{p}(\V{x})$,
even on a continuous space.
 
 Note that estimators constructed by minimizing
 \begin{align*}\\
  D(\tilde{p},q_{\vtheta};U,f)  &=
  Const+\br{U(f(q_{\vtheta}))-U'(f(\tilde{p}))f(q_{\vtheta})}
  \end{align*}
   or
\begin{align*}
D(\tilde{p},\bq_{\vtheta};U,f) &=
 Const+\br{U(f(\bq_{\vtheta}))-U'(f(\tilde{p}))f(\bq_{\vtheta}) }
\end{align*}
 do not resolve the problem of computational cost associated with the
 normalization constant $Z_{\vtheta}$, because
 calculation of $\br{U(f(q_{\vtheta}))}$ or $\br{U(f(\bq_{\vtheta}))}$
 requires
 the same order of computational cost with that of $Z_{\vtheta}$.

 \subsection{Empirical Localization}
 The empirical localization of the
    unnormalized model \eqref{pseudo.model} by $p(\V{x})$ (or $\tilde{p}(\V{x})$) with ratio
     $\alpha$ is defined as follows.
   \begin{definition}
    For $\alpha\in R$,
    the empirically localized model $r_{\alpha,\vtheta}(\V{x})$
    ($\tilde{r}_{\alpha,\vtheta}(\V{x})$) of
    $\bq_{\vtheta}(\V{x})$
    and distributions
    $p(\V{x})$ ($\tilde{p}(\V{x})$) 
    ~\cite{takenouchi2015empirical} is defined by
   \begin{align*}
    r_{\alpha,\vtheta}(\V{x})&=\frac{p(\V{x})^{\alpha}\bq_{\vtheta}(\V{x})^{1-\alpha}}
     {\br{p^{\alpha}\bq_{\vtheta}^{1-\alpha}}}
     =\frac{p(\V{x})^{\alpha}q_{\vtheta}(\V{x})^{1-\alpha}}
     {\br{p^{\alpha}q_{\vtheta}^{1-\alpha}}},
    \\
    \tilde{r}_{\alpha,\vtheta}(\V{x})&=\frac{\tilde{p}(\V{x})^{\alpha}\bq_{\vtheta}(\V{x})^{1-\alpha}}
     {\br{\tilde{p}^{\alpha}\bq_{\vtheta}^{1-\alpha}}}
     =
     \frac{\tilde{p}(\V{x})^{\alpha}q_{\vtheta}(\V{x})^{1-\alpha}}
    {\br{\tilde{p}^{\alpha}q_{\vtheta}^{1-\alpha}}}.
    \end{align*}

   \end{definition}
      Note that
      $r_{0,\vtheta}(\V{x})=\tilde{r}_{0,\vtheta}(\V{x})=\bq_{\vtheta}(\V{x})$,
      $r_{1,\vtheta}(\V{x})=p(\V{x})$, and
      $\tilde{r}_{1,\vtheta}(\V{x})=\tilde{p}(\V{x})$
      hold.
      The empirically localized model is a probability distribution, {\it i.e.},
      $\br{r_{\alpha,\vtheta}}=\br{\tra}=1$ holds and 
      can be constructed only with the
      unnormalized model \eqref{pseudo.model} because 
      the normalization constant $Z_{\vtheta}$ of
      $\bq_{\vtheta}(\V{x})$ is canceled out.      
      Also if $p(\V{x})=\bq_{\vtheta_0}(\V{x})$,
      $r_{\alpha,\vtheta_0}(\V{x})=\bq_{\vtheta_0}(\V{x})$ holds for an arbitrary $\alpha$.

      The empirically localized model
      $\tilde{r}_{\alpha,\vtheta}$ is rewritten as
      \begin{equation*}
       \tilde{r}_{\alpha,\vtheta}(\V{x})
	=
	\begin{cases}
	 \frac{n_{\V{x}}^{\alpha}\exp\left((1-\alpha)\psi_{\vtheta}(\V{x})\right)}
	 {\sumz n_{\V{x}}^{\alpha}\exp\left((1-\alpha)\psi_{\vtheta}(\V{x})\right)}
	 & \V{x}\in \C{Z},\\
	 0 & \mbox{otherwise}.
	\end{cases}
      \end{equation*}
      Note that 
      the term $\tilde{r}_{\alpha,\vtheta}$ can be calculated
      only on the domain $\C{Z}$ of observed examples and we can ignore
      a domain of unobserved examples.
      This implies that the calculation of $\tilde{r}_{\alpha,\vtheta}$
      requires $\C{O}(n)$ summations and
      computational cost for $\tilde{r}_{\alpha,\vtheta}$
      is drastically reduced compared with those of
      $\bq_{\vtheta}$ or $Z_{\vtheta}$
      (which can be $\C{O}(2^d)$ for $\C{X}=\{\pm 1\}^d$).

  \section{Proposed Estimator and statistical properties}
  \label{proposed} 
  In this section, 
  we propose an estimator with 
  the empirical localization and 
  the deformed Bregman divergence,
  which can omit calculation of $Z_{\vtheta}$ and can drastically reduces computational cost.
  Also we investigate statistical aspects of the proposed estimator. 
  The proposed estimator is defined by
  minimization of the deformed Bregman divergence between two different empirically localized models, as
  \begin{align}
   \vhtheta_{U,f} &=
   \argmin_{\vtheta}D(\tra,\trad;U,f),
   \label{estimator}
  \end{align}
  where $\alpha,\alpha'\not=0$, $\alpha\not=\alpha'$.
  Note that the estimator \eqref{estimator} depends on choice of 
  $\alpha,\alpha'$, the function $U$, and $f$, but we omit $\alpha,\alpha'$ for simplicity of notations.  
 \begin{remark}
  The proposed estimator does not require the calculation of
  the normalization constant $Z_{\vtheta}$ because
  the normalization constant is canceled out in
  the empirically localized model $\tilde{r}_{\alpha,\vtheta}$.
  Also, its computational cost is $\C{O}(n)$ because we can ignore the
  domain of unobserved examples.
 \end{remark}
 The minimization problem \eqref{estimator} can be solved
 using arbitrary optimization methods, and we do not care about the point in
 this paper. 

\subsection{Information geometry of the proposed estimator}
     An information geometrical interpretation of the proposed estimator is shown in
    Figure \ref{fig.estimator}.
    In the scheme of the information geometry
    ~\cite{AmariNagaoka00},
    we regard a probability
    distribution as a point on a manifold and discuss geometrical structures of the
    statistical manifold consist of probability distributions.
    From the viewpoint of the information geometry,
    MLE or minimization of KL divergence
    can be interpreted as orthogonal projection of
    the empirical distribution $\tilde{p}$ onto
    the statistical manifold $\bq_{\vtheta}$.
    On the other hand,
    the proposed estimator first
    considers two empirically localized models $\tra$ and $\trad$, which lie on points 
    between $\tilde{p}$ and the statistical manifold $\bq_{\vtheta}$.
    The curve connecting $\tilde{p}$ and $\bq_{\vtheta}$ is called ``e-geodesic''
    and $\tra$ and $\trad$
    are (extended) dividing points associated with
    ratios $\alpha, \alpha'$.
    Then we consider minimization of the deformed Bregman divergence between
    $\tra$ and $\trad$ as shown in Figure \ref{fig.estimator}.
    In the following subsection, we theoretically investigate 
    how the proposed estimator behaves.
   \begin{figure}[ht]
\begin{center}
 \centerline{\includegraphics[width=.85\columnwidth,bb=0 0 842 595]{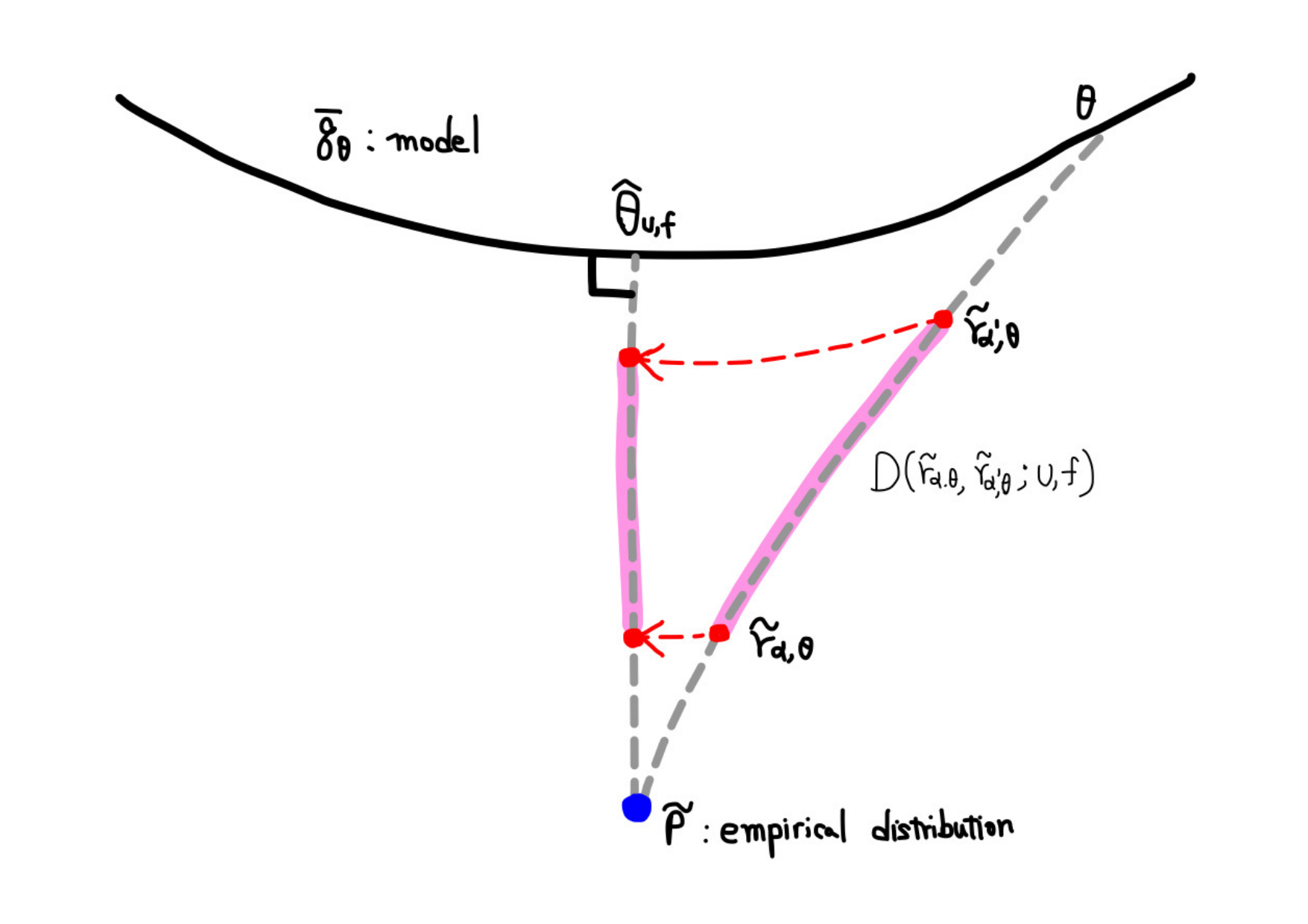}}
 \caption{An information geometrical interpretation of the proposed estimator.
 Each point in the figure corresponds to a probability distribution.
 Note that the deformed Bregman divergence between two points on a gray curve
 is computationally tractable.
       }
       \label{fig.estimator}
\end{center}
\end{figure}
  \subsection{Statistical Properties of Proposed estimator}
    While detailed statistical properties of the proposed estimator may 
    depend on a choice of
    $U$ and $f$, the estimator \eqref{estimator} is Fisher consistent
    regardless of choice of $U$ and $f$, as follows.
    \begin{proposition}
     When the underlying distribution of dataset is written as $p(\V{x})=\bq_{\vtheta_0}(\V{x})$,
    $\vtheta_0$ is a stationary point of the risk function $D(\ra,\rad;U,f)$
    and the estimator $\vhtheta_{U,f}$ is Fisher consistent, {\it i.e.},
    \begin{equation}
     \vtheta_0=\argmin_{\vtheta}D(\ra,\rad;U,f)
     \end{equation}
    holds.
    \end{proposition}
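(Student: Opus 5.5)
The plan rests on one observation made just before the statement: if $p=\bq_{\vtheta_0}$, then $r_{\alpha,\vtheta_0}=\bq_{\vtheta_0}$ for every $\alpha$. Hence at $\vtheta=\vtheta_0$ the two localized models coincide, $\ra=\rad=\bq_{\vtheta_0}$. The argument then has two steps: show that $\vtheta_0$ is a global minimizer of the population risk, and show that it is a stationary point.

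\textbf{Global minimality.} Because $U$ is convex, $\Phi_{U,f}(p,q)\ge 0$ pointwise, and $\Phi_{U,f}(p,p)=0$. Therefore
\[
D(\ra,\rad;U,f)\ge 0 \quad\text{for all }\vtheta,
\]
with equality at $\vtheta_0$. This gives $\vtheta_0\in\argmin_{\vtheta}D(\ra,\rad;U,f)$, which is Fisher consistency in the usual sense. If uniqueness is wanted, strict convexity of $U$ and strict monotonicity of $f$ imply that $D=0$ forces $\ra=\rad$ pointwise. Writing $\ra\propto p^{\alpha}\bq_{\vtheta}^{1-\alpha}$, this equality gives $(p/\bq_{\vtheta})^{\alpha-\alpha'}=\text{const}$. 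Since $\alpha\neq\alpha'$ and both distributions are normalized, this yields $\bq_{\vtheta}=p=\bq_{\vtheta_0}$. Uniqueness of $\vtheta$ then follows from identifiability of the model, which I will state as an assumption.

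\textbf{Stationarity.} Differentiate $\Phi_{U,f}(\ra,\rad)$ with respect to $\vtheta$ using the chain rule, and assume $f$ and $U$ are differentiable. Writing $a=f(\ra)$ and $b=f(\rad)$, we have $\Phi=U(b)-U(a)-U'(a)(b-a)$. Then
\[
\frac{\partial\Phi}{\partial\vtheta}=\bigl(U'(b)-U'(a)\bigr)\frac{\partial b}{\partial\vtheta}-U''(a)(b-a)\frac{\partial a}{\partial\vtheta}.
\]
Both terms carry a factor that vanishes when $a=b$. At $\vtheta_0$ we have $a=b$, so every summand in $\br{\cdot}$ has zero gradient. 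The derivatives $\partial\ra/\partial\vtheta$ themselves are finite: they equal $(1-\alpha)\ra\bigl(\psi'_{\vtheta}-\br{\ra\psi'_{\vtheta}}\bigr)$, so no singularity arises. Summing over $\C{X}$ gives a zero gradient of $D$ at $\vtheta_0$.

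\textbf{Main obstacle.} The delicate point is justifying the interchange of differentiation with the sum $\br{\cdot}$ when $\C{X}$ is infinite, as in the Poisson or lattice examples. I would handle this by assuming dominated convergence conditions on $\psi'_{\vtheta}$ near $\vtheta_0$. The nonnegativity argument needs no such assumption, so the argmin claim holds regardless, and the stationarity claim is a corollary under mild regularity.
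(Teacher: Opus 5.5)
Your proof is correct, and its stationarity half is the paper's argument. The paper writes the gradient of $D(\ra,\rad;U,f)$ explicitly, using $\partial \ra/\partial\vtheta=(1-\alpha)\ra(\psi'_{\vtheta}-\vmu_{\alpha})$. The result is a term carrying $U'(f(\rad))-U'(f(\ra))$ and a term carrying $f(\rad)-f(\ra)$. Both vanish at $\vtheta_0$ because $r_{\alpha,\vtheta_0}=\bq_{\vtheta_0}$ for every $\alpha$.

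Where you genuinely differ is the $\argmin$ claim. The paper stops at the first-order condition and reads Fisher consistency off it, which by itself only shows that $\vtheta_0$ is a critical point. Your observation that $\Phi_{U,f}\ge 0$ pointwise, with $\Phi_{U,f}(p,p)=0$, shows directly that $\vtheta_0$ is a global minimizer. That argument has three advantages:
\begin{itemize}
\item it needs no differentiation in $\vtheta$;
\item it uses no property of $f$;
\item it avoids the derivative--sum interchange on infinite $\C{X}$ that you rightly flag.
\end{itemize}
Your uniqueness step is absent from the paper. Strict convexity and strict monotonicity force $\ra=\rad$; then $\alpha\neq\alpha'$ and full support of $\bq_{\vtheta_0}$ give $\bq_{\vtheta}=\bq_{\vtheta_0}$; then identifiability gives $\vtheta=\vtheta_0$. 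This is what turns the displayed $\vtheta_0=\argmin_{\vtheta}D$ into an actual equality rather than mere membership.

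In return, the paper's route produces the explicit gradient formula, which it reuses in Appendix A and in Lemma~\ref{lemma.if} for the asymptotic-variance and influence-function expansions.
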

   \begin{proof}
   The equilibrium condition of the estimator associated
   $p=\bq_{\vtheta_0}$
    is written as
    \begin{align}
     &0=
    \left.
    \frac{\partial }{\partial \vtheta} D(\ra,\rad;U,f)
    \right|_{\vtheta=\vhtheta_{U,f}}
    \nonumber \\
    =&
    \left<
    (1-\alpha')\left\{U'(f(\rad))-U'(f(\ra))\right\}f'(\rad)
    \rad (\psi'_{\vtheta}-{\vmu}_{\alpha'})
    \right. \nonumber \\
     - &   
    \left.
    \left.
    (1-\alpha)(f(\rad)-f(\ra))U''(f(\ra))f'(\ra)\ra
    (\psi'_{\vtheta}-{\vmu}_{\alpha})
    \right>
    \right|_{\vtheta=\vhtheta_{U,f}},
    \label{equiv.condition}
    \end{align}
   where ${\vmu}_{\alpha}=\br{ \ra \psi'_{\vtheta}}$.
   Because 
   ${r}_{\alpha,\vtheta_0}=\bq_{\vtheta_0}$
   holds
   for an arbitrary $\alpha$,
   $\vhtheta_{U,f}=\vtheta_0$ satisfies
   \eqref{equiv.condition}.
   \end{proof}
Furthermore, we obtain the following asymptotic property of the estimator.
  \begin{lemma}\label{lemma.1}
   The asymptotic distribution of $\vhtheta_{U,f}$
   is given as multivariate Gaussian distribution,   
   \begin{equation}
    \sqrt{n}(\vhtheta_{U,f}-\vtheta_0) \sim N(0,V_{U,f,\vtheta_0})
   \end{equation}
   where $V_{U,f,\vtheta_0}$ is an asymptotic variance,
   \begin{equation}
    V_{U,f,\vtheta_0}=H_{U,f,\vtheta_0}^{-1}J_{U,f,\vtheta_0}H_{U,f,\vtheta_0}^{-1}.
     \label{variance.uf}
   \end{equation}
   $H_{U,f,\vtheta}$ and $J_{U,f,\vtheta}$ are matrices defined as
      \begin{align}
    H_{U,f,\vtheta_0}&=\br{
     \xi_{U,f}(\bq_{\vtheta_0})
     \bq_{\vtheta_0}
     (\psi'_{\vtheta_0}-\vmu_0)
       (\psi'_{\vtheta_0}-\vmu_0)^T},
      \nonumber \\
       J_{U,f,\vtheta_0}&=\br{\bq_{\vtheta_0}\zeta_{U,f,\vtheta_0}\zeta_{U,f,\vtheta_0}^T},
\nonumber
      \end{align}
   where   $\vmu_0=\br{\bq_{\vtheta_0}\psi_{\vtheta_0}'}$,
   \begin{align}
   \zeta_{U,f,\vtheta}(\V{x})&=
    \xi_{U,f}(\bq_{\vtheta_0}(\V{x}))
      (\psi_{\vtheta_0}'(\V{x})-\vmu_0      )
      -
      \br{\bq_{\vtheta_0}\xi_{U,f}(\bq_{\vtheta_0})
      (\psi_{\vtheta_0}'-\vmu_0)},\nonumber \\
    \xi_{U,f}(z)&=U''(f(z))f'(z)^2z.
     \label{xi}
   \end{align}
  \end{lemma}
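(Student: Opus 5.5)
The plan is to treat $\vhtheta_{U,f}$ as an M-estimator (Z-estimator) defined by the estimating equation obtained from \eqref{equiv.condition}. In that equation I replace $\ra,\rad$ by their empirical versions $\tra,\trad$, and I write $S_n(\vtheta)=\frac{\partial}{\partial\vtheta}D(\tra,\trad;U,f)$. I will assume the usual regularity conditions: consistency of $\vhtheta_{U,f}$, which is suggested by the Fisher consistency of the Proposition; enough smoothness of $U,f,\psi_{\vtheta}$; and, for simplicity, a finite $\C{X}$ or sufficient tail control, so that with probability tending to one every $\V{x}$ with $\bq_{\vtheta_0}(\V{x})>0$ is observed. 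A first-order Taylor expansion $0=S_n(\vhtheta)=S_n(\vtheta_0)+\partial_{\vtheta}S_n(\vtheta^*)(\vhtheta-\vtheta_0)$ then gives
\[
\sqrt{n}(\vhtheta-\vtheta_0)=-\{\partial_{\vtheta}S_n(\vtheta^*)\}^{-1}\sqrt{n}\,S_n(\vtheta_0).
\]
So it suffices to (i) linearize $\sqrt{n}S_n(\vtheta_0)$ and apply the CLT, and (ii) find the probability limit of the Hessian.

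For step (i), write $\bq=\bq_{\vtheta_0}$ and $\tp=\bq(1+\epsilon)$ with $\epsilon=(\tp-\bq)/\bq=O_p(n^{-1/2})$ pointwise. Since $\br{\bq\epsilon}=0$, expanding $\tp^{\alpha}\bq^{1-\alpha}$ and its normaliser gives $\tilde{r}_{\alpha,\vtheta_0}=\bq(1+\alpha\epsilon)+o_p(n^{-1/2})$, and similarly for $\alpha'$. Hence
\begin{align*}
f(\tilde r_{\alpha',\vtheta_0})-f(\tilde r_{\alpha,\vtheta_0})&\approx(\alpha'-\alpha)f'(\bq)\bq\epsilon,\\
U'(f(\tilde r_{\alpha',\vtheta_0}))-U'(f(\tilde r_{\alpha,\vtheta_0}))&\approx(\alpha'-\alpha)U''(f(\bq))f'(\bq)\bq\epsilon .
\end{align*}
Both brackets in \eqref{equiv.condition} are already $O_p(n^{-1/2})$. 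I can therefore evaluate the remaining factors at $\bq$, replacing $\vmu_\alpha$ and $\vmu_{\alpha'}$ by $\vmu_0$. The two terms then combine with weights $(1-\alpha')-(1-\alpha)=\alpha-\alpha'$, which gives
\[
S_n(\vtheta_0)\approx-(\alpha-\alpha')^2\br{\xi_{U,f}(\bq)(\tp-\bq)(\psi'_{\vtheta_0}-\vmu_0)}=-(\alpha-\alpha')^2\nmean\zeta_{U,f,\vtheta_0}(\V{x}_i).
\]
The last equality holds because subtracting $\br{\bq\,\xi_{U,f}(\bq)(\psi'_{\vtheta_0}-\vmu_0)}$ centers the summand, and $\br{\tp\,g}=\nmean g(\V{x}_i)$. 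The summands are i.i.d. with mean zero under $p=\bq$. The multivariate CLT therefore gives $\sqrt{n}S_n(\vtheta_0)\to N(0,(\alpha-\alpha')^4J_{U,f,\vtheta_0})$.

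For step (ii), I use that $\tp\to\bq$ and $\vtheta^*\to\vtheta_0$, so $\partial_{\vtheta}S_n(\vtheta^*)$ converges to the derivative of the population score at $\vtheta_0$. Near $\vtheta_0$, with $\delta=\vtheta-\vtheta_0$, one has $r_{\alpha,\vtheta}\approx\bq\{1+(1-\alpha)(\psi'_{\vtheta_0}-\vmu_0)^T\delta\}$. Differentiating \eqref{equiv.condition} at $\vtheta_0$ therefore uses the same structure as step (i), with $\epsilon$ replaced by $(\psi'-\vmu_0)^T\delta$. Only terms where the derivative hits one of the brackets survive, since the brackets vanish at $\vtheta_0$. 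The differences are now $(\alpha-\alpha')f'(\bq)\bq(\psi'-\vmu_0)^T\delta$ and the analogous expression for $U'$. Combining with the weights $(1-\alpha')$ and $-(1-\alpha)$ as before yields the limit $(\alpha-\alpha')^2H_{U,f,\vtheta_0}$. The sign has to be tracked carefully: it should come out consistent with a minimum, and the overall sign cancels in the sandwich anyway. Invertibility of $H$ is assumed as a regularity condition.

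Slutsky's lemma then gives the sandwich covariance
\[
\{(\alpha-\alpha')^2H\}^{-1}(\alpha-\alpha')^4J\{(\alpha-\alpha')^2H\}^{-1}=H^{-1}JH^{-1},
\]
so the $\alpha$-dependence cancels, as the statement implicitly asserts. I expect the main obstacle to be the rigorous control of the remainder terms rather than the algebra. This means justifying that the $o_p(n^{-1/2})$ remainders in the expansion of $\tilde r_{\alpha,\vtheta}$ are uniform in $\vtheta$ near $\vtheta_0$ and summable over $\C{X}$. That in turn requires handling unobserved states, where $\tp=0$ and $\tra=0$, and $\alpha$ outside $[0,1]$, where $\tp^{\alpha}$ is not smooth at $0$. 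I would first prove the result for finite $\C{X}$, where all states are eventually observed and everything is a smooth function of the finitely many frequencies via the delta method. I would then indicate the tail conditions needed for countable $\C{X}$, such as the Poisson case.
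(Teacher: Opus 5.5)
Your proposal is correct and follows essentially the same route as the paper's Appendix A: a first-order expansion of the estimating equation at $\vtheta_0$, linearization of the score in $\tp-\bq_{\vtheta_0}$ to $-(\alpha-\alpha')^2\nmean\zeta_{U,f,\vtheta_0}(\V{x}_i)$ followed by the CLT, convergence of the Hessian to $(\alpha-\alpha')^2H_{U,f,\vtheta_0}$ (the sign does come out positive), and the sandwich formula in which the $\alpha$-factors cancel. The differences are only in presentation: you work from the already simplified score \eqref{equiv.condition} and use that its two brackets vanish at $\vtheta_0$, whereas the paper differentiates the full expression in $\epsilon$ and $\vtheta$ and lets the terms cancel; you are also more explicit than the paper about the regularity assumptions (consistency, unobserved states, finite versus countable $\C{X}$).
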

     \begin{proof}
      See Appendix A.
     \end{proof}
    While Figure \ref{fig.estimator} intuitively shows that 
    behavior of 
    the estimator with $\alpha\simeq 1$ and $\alpha'\simeq 0$
    may be similar to that of MLE,
    the asymptotic analysis shows that the efficiency of estimator
    does not depend on choice of $\alpha,\alpha'$.
    Note that the efficiency of the estimator depends on a choice of 
    the generating function $U$ of Bregman divergence and
    the deformation function $f$.
   \begin{theorem}
    \label{theorem.efficiency}
   When the function $f$ satisfies
   \begin{equation}
    \xi_{U,f}(z)=U''(f(z))f'(z)^2z=1,
     \label{condition.f}
   \end{equation}
   the estimator $\vtheta_{U,f}$ is asymptotically efficient, {\it i.e.},
   the asymptotic variance \eqref{variance.uf} is equal to   inverse of 
   Fisher information matrix
    $\br{\bq_{\vtheta_0}} (\psi'_{\vtheta_0}-\vmu_0)
       (\psi'_{\vtheta_0}-\vmu_0)^T$.   
   \end{theorem}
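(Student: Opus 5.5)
We plan to substitute condition \eqref{condition.f} directly into the asymptotic variance formula of Lemma \ref{lemma.1}. Then we simplify $H_{U,f,\vtheta_0}$ and $J_{U,f,\vtheta_0}$ and check that the sandwich $H^{-1}JH^{-1}$ collapses to the inverse Fisher information. No new asymptotic argument is needed: all of the probabilistic work (consistency, the central limit behaviour of the estimating equation, the Taylor expansion around $\vtheta_0$) is contained in Lemma \ref{lemma.1}, which we take as given.

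First, under $\xi_{U,f}\equiv 1$ the matrix $H_{U,f,\vtheta_0}$ reduces to
\begin{equation*}
H_{U,f,\vtheta_0}=\br{\bq_{\vtheta_0}(\psi'_{\vtheta_0}-\vmu_0)(\psi'_{\vtheta_0}-\vmu_0)^T}=:G_{\vtheta_0}.
\end{equation*}
We then identify $G_{\vtheta_0}$ with the Fisher information of the normalized model $\bq_{\vtheta}$. Since $\log\bq_{\vtheta}=\psi_{\vtheta}-\log Z_{\vtheta}$ and $\partial_{\vtheta}\log Z_{\vtheta}=\br{\bq_{\vtheta}\psi'_{\vtheta}}$, the score is $\psi'_{\vtheta}-\vmu$ with $\vmu=\br{\bq_{\vtheta}\psi'_{\vtheta}}$. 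Its second moment under $\bq_{\vtheta_0}$ is exactly $G_{\vtheta_0}$, which is the matrix intended in the statement; we read the statement's bracket as enclosing the whole product.

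Second, we simplify $\zeta_{U,f,\vtheta_0}$. With $\xi\equiv1$, its first term is $\psi'_{\vtheta_0}(\V{x})-\vmu_0$, and its subtracted term is $\br{\bq_{\vtheta_0}(\psi'_{\vtheta_0}-\vmu_0)}=\vmu_0-\vmu_0\br{\bq_{\vtheta_0}}=0$, because $\bq_{\vtheta_0}$ is normalized. Hence $\zeta_{U,f,\vtheta_0}=\psi'_{\vtheta_0}-\vmu_0$ and $J_{U,f,\vtheta_0}=G_{\vtheta_0}$. Therefore $V_{U,f,\vtheta_0}=G^{-1}GG^{-1}=G^{-1}$, the inverse Fisher information, which is the Cram\'er--Rao bound attained by MLE. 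This is precisely asymptotic efficiency.

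The only point needing care is not computational but a check that the hypotheses of Lemma \ref{lemma.1} remain valid under \eqref{condition.f}. In particular, $H_{U,f,\vtheta_0}=G_{\vtheta_0}$ must be invertible, and we will assume this as the usual identifiability or regularity condition on the model. We should also confirm that \eqref{condition.f} is compatible with $U$ strictly convex and $f$ increasing; for instance, $f'(z)=(zU''(f(z)))^{-1/2}>0$ defines $f$ as the solution of an ODE. The cancellation of the centering term in $\zeta$ via $\br{\bq_{\vtheta_0}}=1$ is the step where an error could slip in, so we will verify it explicitly.
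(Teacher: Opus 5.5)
Your proposal is correct and follows the paper's own proof: substituting $\xi_{U,f}\equiv 1$ makes both $H_{U,f,\vtheta_0}$ and $J_{U,f,\vtheta_0}$ equal to the Fisher information matrix, so the sandwich $H^{-1}JH^{-1}$ collapses to its inverse. Two details the paper leaves implicit are handled in your write-up: you check explicitly that the centering term of $\zeta_{U,f,\vtheta_0}$ vanishes because $\bq_{\vtheta_0}$ is normalized, and you read the misplaced bracket in the statement as enclosing the whole outer product.
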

  \begin{proof}
   When the function $f$ satisfies \eqref{condition.f},
   we observe that
   $H_{U,f,\vtheta_0}$ and $J_{U,f,\vtheta_0}$
   are equal to Fisher information matrix
   $\br{   
   \bq_{\vtheta_0}
   \left(\psi_{\vtheta_0}'-\vmu_0\right)
   \left(\psi_{\vtheta_0}'-\vmu_0\right)^T}$,
   which concludes the theorem.
  \end{proof}
  
    Note that the proposed estimator does not require calculation of the
    normalization constant $Z_{\vtheta}$
    and its computational cost can be drastically reduced compared to
    conventional estimator such as  MLE, 
    while the proposed estimator attains asymptotic efficiency
    if we choose the appropriate deformation function $f$ 
    for a generating function $U$.
    
  \begin{example}
   \label{ex.kl}
   For $U(z)=\exp(z)$, $f(z)=\log z$ satisfies the condition \eqref{condition.f}
   and the deformed Bregman divergence reduces to the extended
   KL-divergence, and associated risk function is written as
   $\br{\tra\log\frac{\tra}{\trad}}$.
  \end{example}
   \begin{example}\label{ex.beta}
   For $U(z)=\frac{1}{1+\beta}(\beta z+1)^{(\beta +1)/\beta}$ associated
   with the $\beta$-divergence
   \cite{Murata_etal02},
    $f(z)=\frac{(1+\beta)^{2\beta/(1+\beta)}}{\beta}
     z^{\frac{\beta}{1+\beta}}
     -\frac{1}{\beta}$
   satisfies the condition \eqref{condition.f} and the deformed $\beta$-divergence is
   written as
   \begin{equation}
    D(p,q;U,f)=(1+\beta)\br{q+\frac{p}{\beta}-\frac{1+\beta}{\beta}p^{\frac{1}{1+\beta}}q^{\frac{\beta}{1+\beta}}}
   \end{equation}
   which is equivalent to the $\alpha$-divergence
   ~\cite{amari2009alpha}.
    Because $\br{\tra}=\br{\trad}=1$ holds, the estimator can be calculated by
    minimizing
    \begin{align*}
     -\br{\tra^{\frac{1}{1+\beta}}\trad^{\frac{\beta}{1+\beta}}}.
    \end{align*}
   \end{example}
   \begin{example}
    If we set $\beta=1$ in the above Example \ref{ex.beta},
    the generating function of the Bregman divergence is $U(z)=\frac{z^2}{2}$ and $f(z)=2\sqrt{z}$ satisfies
   \eqref{condition.f}
   and the associated deformed Bregman divergence is Hellinger distance,
   \begin{equation}
    D(p,q;U,f)=2\br{(\sqrt{q}-\sqrt{p})^2}
   \end{equation}
    and the risk function of the proposed estimator is 
    $D(\tra,\trad;U,f)=4-4\br{\sqrt{\tra\trad}}$.
   \end{example}
   \begin{example}\label{example.is}
   For $U(z)=-\log (-z)$ which is associated with the Itakura-Saito
   distance, $f_{\pm}(z)=-\exp(\pm 2\sqrt{z})$ satisfies \eqref{condition.f}, and
   the deformed Itakura-Saito distance is written as
   \begin{equation}
    D(p,q;U,f_{\pm})=\br{\mp 2(\sqrt{p}-\sqrt{q})+e^{\pm 2(\sqrt{p}-\sqrt{q})}-1}.
   \end{equation}
    Note that $D(p,q;U,f_+)=D(q,p;U,f_-)$ holds.
   \end{example}

  \subsection{Robust estimation with the deformed Bregman divergence}
  \label{sec.robust}
  While an appropriately selected deformation function $f$ can attain
  the asymptotic efficiency
  as shown in the previous subsection,
  the proposed estimator can acquire various kinds of properties by
  selecting functions $U$ and $f$ at the cost of the asymptotic efficiency.
  In this section, we focus on robustness of the proposed estimator
  against outlier noise.

  At first, we introduce a typical measure of robustness, the influence function
  \cite{hampel2011robust}.
   \begin{definition}
    Let  $p(\V{x})=\bq_{\vtheta_0}(\V{x})$ be the underlying
    distribution and 
    $p_{\varepsilon,\tilde{\V{x}}}(\V{x})=(1-\varepsilon)p(\V{x})+\varepsilon \I(\V{x}=\tilde{\V{x}})$ be a distribution
    contaminated by an outlier noise $\tilde{\V{x}}$,
   where $\varepsilon$ is a constant in $[0,1]$ and
    $\I$ is an indicator function. 
    Let $\vhtheta_{\varepsilon}$ be an estimator
    constructed with the contaminated distribution $p_{\varepsilon,\tilde{\V{x}}}(\V{x})$.
    Then an influence function of the estimator
    is defined as
      \begin{align}
    {\rm IF}(\tilde{\V{x}})&=\lim_{\epsilon\to 0}
    \frac{\vhtheta_{\epsilon}-\vtheta_0}{\epsilon}.
      \end{align}
   \end{definition}
      The influence function is a quantity representing 
   how the estimator is influenced
   by a
   small proportion of outlier noise $\tilde{\V{x}}$.
   \begin{definition}
    The gross-error sensitivity is defined by
    $\sup_{\tilde{\V{x}}}| {\rm IF}(\tilde{\V{x}}) |$, which is a basic
    characteristic of robustness, and
    an estimator having finite gross-error
    sensitivity is said to be B-robust.
   \end{definition}
   
   \begin{lemma}
    \label{lemma.if}
    Let $\vhtheta_{U,f,\varepsilon}$ be the proposed estimator \eqref{estimator}
    associated with 
    $p_{\varepsilon,\tilde{\V{x}}}(\V{x})$.
    Then an influence function of the estimator $\vhtheta_{U,f,\varepsilon}$
    is written as
      \begin{align}
       {\rm IF}(\tilde{\V{x}})
       \propto &
       \xi_{U,f}(\bq_{\vtheta_0}(\tilde{\V{x}}))
       \left\{
    \psi'_{\vtheta_0}(\tilde{\V{x}})
    -\vmu_0
       \right\}
       -\br{
    \bq_{\vtheta_0}\xi_{U,f}(\bq_{\vtheta_0})
    \left\{
    \psi'_{\vtheta_0}
       -\vmu_0
       \right\}
       }
       \label{influence.function}
      \end{align}
    where the function $\xi_{U,f}$ is defined in \eqref{xi}.
   \end{lemma}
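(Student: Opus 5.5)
The plan is the standard M-estimator argument. First, replace $p$ by the contaminated distribution $p_{\varepsilon}=p_{\varepsilon,\tilde{\V{x}}}$ in the localized models, writing $r_{\alpha,\vtheta,\varepsilon}\propto p_{\varepsilon}^{\alpha}q_{\vtheta}^{1-\alpha}$. Then write the estimating equation $G(\vtheta,\varepsilon)=\frac{\partial}{\partial\vtheta}D(r_{\alpha,\vtheta,\varepsilon},r_{\alpha',\vtheta,\varepsilon};U,f)=0$, whose explicit form is \eqref{equiv.condition} with $p$ replaced by $p_\varepsilon$. Since $G(\vtheta_0,0)=0$ by Fisher consistency, the implicit function theorem gives
\[
{\rm IF}(\tilde{\V{x}})=-\left(\frac{\partial G}{\partial\vtheta}\right)^{-1}\frac{\partial G}{\partial\varepsilon}\Big|_{\vtheta_0,\varepsilon=0}.
\]
The matrix $\partial G/\partial\vtheta$ at $(\vtheta_0,0)$ is a constant independent of $\tilde{\V{x}}$. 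It should be proportional to $H_{U,f,\vtheta_0}$ of Lemma~\ref{lemma.1}, which I take from the Appendix A computation. So it only contributes to the proportionality constant, and the real work is computing $\partial G/\partial\varepsilon$.

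To compute $\partial G/\partial\varepsilon$, observe that every term of \eqref{equiv.condition} contains a factor $U'(f(r_{\alpha'}))-U'(f(r_\alpha))$ or $f(r_{\alpha'})-f(r_\alpha)$. Both factors vanish at $(\vtheta_0,0)$ because $r_{\alpha,\vtheta_0}=r_{\alpha',\vtheta_0}=\bq_{\vtheta_0}$. Hence, at first order in $\varepsilon$, only the derivatives of these difference factors survive; all other factors are evaluated at $\bq_{\vtheta_0}$, with $\vmu_\alpha=\vmu_{\alpha'}=\vmu_0$. Next I compute $\partial r_{\alpha,\vtheta_0,\varepsilon}/\partial\varepsilon$ at $\varepsilon=0$. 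From $\log r_{\alpha}=\alpha\log p_\varepsilon+(1-\alpha)\psi_{\vtheta}-\log\br{p_\varepsilon^\alpha q_{\vtheta}^{1-\alpha}}$ and $\partial_\varepsilon p_\varepsilon=\delta_{\tilde{\V{x}}}-p$, I get
\[
\frac{\partial r_{\alpha}}{\partial\varepsilon}=\alpha\,\bq_{\vtheta_0}\left(\frac{\delta_{\tilde{\V{x}}}}{\bq_{\vtheta_0}}-1\right)-\alpha\,\bq_{\vtheta_0}\br{\delta_{\tilde{\V{x}}}-\bq_{\vtheta_0}}=\alpha(\delta_{\tilde{\V{x}}}-\bq_{\vtheta_0}),
\]
using $\br{\delta_{\tilde{\V{x}}}-p}=0$. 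Therefore
\[
\frac{\partial}{\partial\varepsilon}\bigl(f(r_{\alpha'})-f(r_\alpha)\bigr)=(\alpha'-\alpha)f'(\bq_{\vtheta_0})(\delta_{\tilde{\V{x}}}-\bq_{\vtheta_0}),
\]
and the derivative of the $U'$ difference is the same expression multiplied by $U''(f(\bq_{\vtheta_0}))$.

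Substituting back, both terms of \eqref{equiv.condition} become multiples of
\[
\br{U''(f(\bq_{\vtheta_0}))f'(\bq_{\vtheta_0})^2\,\bq_{\vtheta_0}\,(\delta_{\tilde{\V{x}}}-\bq_{\vtheta_0})(\psi'_{\vtheta_0}-\vmu_0)}.
\]
This carries the overall scalar $(\alpha'-\alpha)\{(1-\alpha')-(1-\alpha)\}=-(\alpha'-\alpha)^2$, which is nonzero since $\alpha\neq\alpha'$. Because $\bq_{\vtheta_0}\delta_{\tilde{\V{x}}}$ summed against a function evaluates that function at $\tilde{\V{x}}$ times $\bq_{\vtheta_0}(\tilde{\V{x}})$, one factor of $\bq_{\vtheta_0}$ is cancelled by the $\delta/\bq$ structure. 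I will track this carefully, so that the first part yields $\xi_{U,f}(\bq_{\vtheta_0}(\tilde{\V{x}}))(\psi'_{\vtheta_0}(\tilde{\V{x}})-\vmu_0)$ and the $-\bq_{\vtheta_0}$ part yields $-\br{\bq_{\vtheta_0}\xi_{U,f}(\bq_{\vtheta_0})(\psi'_{\vtheta_0}-\vmu_0)}$. This is exactly \eqref{influence.function}, i.e.\ $\zeta_{U,f,\vtheta_0}(\tilde{\V{x}})$, consistent with the $J$ matrix in Lemma~\ref{lemma.1}.

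The main obstacle is bookkeeping the powers of $\bq_{\vtheta_0}$ in the $\varepsilon$-derivative of $r_\alpha$. If I write $\delta_{\tilde{\V{x}}}/\bq_{\vtheta_0}$ inside the logarithmic derivative, the factor $r\,(\cdot)$ in \eqref{equiv.condition} combines with it, so I must check that the final weight is $\xi_{U,f}(z)=U''f'^2z$ and not $z^2$ or $z^0$. I will verify this against the Appendix A influence-term computation for $J_{U,f,\vtheta_0}$, which has the same structure with $\tilde p-p$ in place of $\delta_{\tilde{\V{x}}}-p$. Regularity conditions allowing differentiation under the sum and invertibility of $H_{U,f,\vtheta_0}$ are assumed as in Lemma~\ref{lemma.1}.
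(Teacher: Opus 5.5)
Your proposal is correct and follows essentially the paper's route: linearize the estimating equation at $(\vtheta_0,\varepsilon=0)$, identify the $\varepsilon$-term as $-(\alpha-\alpha')^2\br{\xi_{U,f}(\bq_{\vtheta_0})(\psi'_{\vtheta_0}-\vmu_0)(\I(\V{x}=\tilde{\V{x}})-\bq_{\vtheta_0})}$ as in \eqref{lemma.if2}, and absorb the $\tilde{\V{x}}$-independent Hessian $(\alpha-\alpha')^2H_{U,f,\vtheta_0}$ into the proportionality. Differentiating \eqref{equiv.condition} directly, using that its difference factors vanish at $(\vtheta_0,0)$, is a tidier way to reach the same term than the paper's full Appendix~A expansion, and your displayed formulas already carry the correct single power of $\bq_{\vtheta_0}$ in $\xi_{U,f}$.
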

   \begin{proof}    
    By expanding 
    the equilibrium condition of the estimator
    $\vhtheta_{U,f,\varepsilon}$
    associated the
    contaminated distribution around $\varepsilon=0$,
    we obtain 
    \begin{align}
     0\simeq &
      \left.
      \frac{\partial }{\partial \vtheta}
      D(\tra,\trad;U,f)
      \right|_{\vtheta=\vtheta_0}+
      \left.\frac{\partial^2}{\partial \vtheta\partial \vtheta^T}
      D(\tra,\trad;U,f)\right|_{\vtheta=\vtheta_0}
      (\vhtheta_{U,f,\epsilon}-\vtheta_0)
     \nonumber \\ 
     =&
-\epsilon (\alpha-\alpha')^2
      \br{\xi_{U,f}(\bq_{\vtheta_0})(\psi_{\vtheta_0}'-\vmu_0)
      (\I(\V{x}=\tilde{\V{x}})-q)
      }
     \nonumber \\
     &
+
      \left.\frac{\partial^2}{\partial \vtheta\partial \vtheta^T}
      D(\tra,\trad;U,f)\right|_{\vtheta=\vtheta_0}
      (\vhtheta_{U,f,\epsilon}-\vtheta_0).
     \label{lemma.if2}
    \end{align}
    The detailed calculation is exactly the same as 
    the derivation of \eqref{eq.d2} shown in Appendix A,
    and can be obtained immediately by replacing $\tp(x)=\bq_{\vtheta_0}(\V{x})+\epsilon s(\V{x})$ 
with $p_{\epsilon,\tilde{\V{x}}}=(1-\epsilon)p(\V{x})+\epsilon \I(\V{x}=\tilde{\V{x}})=
p(\V{x})+\epsilon \left(\I(\V{x}=\tilde{\V{x}})-p(\V{x})\right)$.
    By dividing \eqref{lemma.if2} with $\epsilon$, we obtain \eqref{lemma.if}.
   \end{proof}
   We observe that
   the first term of \eqref{influence.function} is dominant 
   for the influence function.
   \begin{remark}
       If $\xi_{U,f}(z)=1$ and $\psi'_{\vtheta}$ is 
     bounded,
    the estimator $\vhtheta_{U,f}$ is asymptotically efficient, and
    the gross-error sensitivity $\sup_{\tilde{\V{x}}}|{\rm
    IF}(\tilde{\V{x}})|$ is bounded, {\it i.e.}, the estimator is B-robust.
    For example, 
    the function $\psi_{\vtheta}'(\V{x})$ associated with Boltzmann machine on 
    $\C{X}=\{+1,-1\}^p$ is always bounded .
   \end{remark}
\begin{remark}
 When we employ the
 Poisson distribution (Example \ref{ex.poisson}) or
 the Multivariate Lattice Gaussian Distribution (Example \ref{ex.gaussian}),
$\psi_{\vtheta}'(\V{x})$ is not bounded and then 
 the influence function can diverge.
\end{remark}

      \begin{theorem}
       \label{theorem.robust}
       Assume that
       $\psi_{\vtheta}(\V{x})=\vtheta^T\vphi(\V{x})$ and
       \begin{equation}
	\xi_{U,f}(z)=z(1-z)
	 \label{condition.robust}
       \end{equation}
       holds.      
      Under the same conditions in the Lemma \ref{lemma.if},
         \begin{align*}
    \lim_{|\psi (\tilde{\V{x}})|\to \infty}\bq_{\vtheta_0}(\tilde{\V{x}})
    (1-\bq_{\vtheta_0}(\tilde{\V{x}}))
    \vphi(\tilde{\V{x}})=0
   \end{align*}
      holds and
      $\sup_{\tilde{\V{x}}}|{\rm IF}(\tilde{\V{x}})|$ is bounded
       even when
       $\psi( \tilde{\V{x}})$ diverges.
      \end{theorem}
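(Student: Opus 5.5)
Under the linear model $\psi_{\vtheta}(\V{x})=\vtheta^T\vphi(\V{x})$ we have $\psi'_{\vtheta}(\V{x})=\vphi(\V{x})$. The plan is to start from the influence-function formula of Lemma \ref{lemma.if} and bound its two terms separately. Substituting the condition \eqref{condition.robust} $\xi_{U,f}(z)=z(1-z)$, the formula reads
\begin{align*}
{\rm IF}(\tilde{\V{x}})\propto
\bq_{\vtheta_0}(\tilde{\V{x}})(1-\bq_{\vtheta_0}(\tilde{\V{x}}))\{\vphi(\tilde{\V{x}})-\vmu_0\}
-\br{\bq_{\vtheta_0}^2(1-\bq_{\vtheta_0})(\vphi-\vmu_0)}.
\end{align*}
The proportionality constant is the inverse of the matrix $H_{U,f,\vtheta_0}$ (up to the scalar $(\alpha-\alpha')^2$), so it does not depend on $\tilde{\V{x}}$. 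The second term also does not depend on $\tilde{\V{x}}$. It is finite provided $\vmu_0$ and the first moment of $\vphi$ under $\bq_{\vtheta_0}$ are finite, which I will take as a standing regularity assumption, as the paper implicitly does for Lemma \ref{lemma.1}. Since $0\le \bq_{\vtheta_0}^2(1-\bq_{\vtheta_0})\le \bq_{\vtheta_0}$, that moment condition is enough.

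So boundedness of $\sup_{\tilde{\V{x}}}|{\rm IF}|$ reduces to showing that $g(\tilde{\V{x}})=\bq_{\vtheta_0}(\tilde{\V{x}})(1-\bq_{\vtheta_0}(\tilde{\V{x}}))(\vphi(\tilde{\V{x}})-\vmu_0)$ is bounded. The piece involving $\vmu_0$ is trivial because $0\le z(1-z)\le 1/4$. For the main limit I write $\bq_{\vtheta_0}(\tilde{\V{x}})=\exp(\vtheta_0^T\vphi(\tilde{\V{x}}))/Z_{\vtheta_0}$.
\begin{itemize}
\item If $\psi(\tilde{\V{x}})=\vtheta_0^T\vphi(\tilde{\V{x}})\to+\infty$, then $\bq_{\vtheta_0}(\tilde{\V{x}})$ would exceed $1$, which is impossible for a normalized model. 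So along any sequence of states, $\psi$ can diverge only to $-\infty$. I will note that normalizability forces $\psi\le \log Z_{\vtheta_0}$.
\item If $\psi\to-\infty$, then $\bq_{\vtheta_0}$ decays like $e^{\psi}$, while the factor $1-\bq_{\vtheta_0}\le 1$.
\end{itemize}
The remaining task is to compare $|\vphi(\tilde{\V{x}})|$ with $|\psi(\tilde{\V{x}})|$, which I address next.

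This comparison is the main obstacle. Divergence of $\psi=\vtheta_0^T\vphi$ controls only one linear combination of $\vphi$, whereas the bound needs every component of $\vphi$ to be controlled. My first attempt is to assume, or argue from the examples (Poisson, lattice Gaussian, where the sufficient statistics grow at most polynomially while $\psi$ grows at least linearly or quadratically), that $|\vphi(\tilde{\V{x}})|\le C(1+|\psi(\tilde{\V{x}})|)^k$. Then $e^{\psi}|\vphi|\le C e^{\psi}(1+|\psi|)^k\to0$ as $\psi\to-\infty$, which gives the stated limit $\lim \bq_{\vtheta_0}(1-\bq_{\vtheta_0})\vphi=0$.

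Finally I combine the pieces. Away from the divergent regime, $g$ is bounded on any set where $\psi$ stays bounded, under the same polynomial control. In the divergent regime, $g$ tends to zero. Together these show that $g$ is bounded, hence $\sup_{\tilde{\V{x}}}|{\rm IF}(\tilde{\V{x}})|<\infty$ and the estimator is B-robust.
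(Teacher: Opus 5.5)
The paper states Theorem~\ref{theorem.robust} without proof, so your argument is judged on its own. Your reduction is sound. The prefactor in \eqref{influence.function} is $H_{U,f,\vtheta_0}^{-1}$; in fact the $(\alpha-\alpha')^2$ factors cancel. The second term and the $\vmu_0$-piece are constants, and $\psi_{\vtheta_0}\le\log Z_{\vtheta_0}$ rules out $\psi\to+\infty$. However, there is a genuine gap exactly where you locate the main obstacle. The bound $|\vphi(\tilde{\V{x}})|\le C(1+|\psi_{\vtheta_0}(\tilde{\V{x}})|)^k$ is not a hypothesis of the theorem, and you do not derive it; you only check it heuristically on the examples. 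Both the limit and the uniform bound rest on it, so as written you have proved the theorem only for models satisfying an extra growth condition.

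The missing idea is that no comparison between $\vphi$ and $\psi$ is needed. The first-moment condition you already adopt is forced anyway, since $\vmu_0$ appears in \eqref{influence.function}, and because $\C{X}$ is discrete it suffices. Each term of the convergent nonnegative series $\br{\bq_{\vtheta_0}|\vphi|}$ is bounded by the sum, so
\begin{align*}
\sup_{\tilde{\V{x}}}\bq_{\vtheta_0}(\tilde{\V{x}})\bigl(1-\bq_{\vtheta_0}(\tilde{\V{x}})\bigr)|\vphi(\tilde{\V{x}})|\le\br{\bq_{\vtheta_0}|\vphi|}<\infty .
\end{align*}
For the limit, fix $\varepsilon>0$. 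The set $F_\varepsilon=\{\V{x}:\bq_{\vtheta_0}(\V{x})|\vphi(\V{x})|\ge\varepsilon\}$ has at most $\br{\bq_{\vtheta_0}|\vphi|}/\varepsilon$ elements, so $\psi_{\vtheta_0}$ attains a finite minimum $m_\varepsilon$ on it. Then $\psi_{\vtheta_0}(\tilde{\V{x}})<m_\varepsilon$ forces $\bq_{\vtheta_0}(\tilde{\V{x}})(1-\bq_{\vtheta_0}(\tilde{\V{x}}))|\vphi(\tilde{\V{x}})|<\varepsilon$. Your ``away from the divergent regime'' step also needs no growth condition, because normalizability makes $\{\V{x}:\psi_{\vtheta_0}(\V{x})\ge -M\}$ finite for every $M$.

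Alternatively, your route can be salvaged. Your bound holds with $k=1$ if $\vtheta_0$ is interior to the natural parameter space, which the expansions around $\vtheta_0$ behind Lemma~\ref{lemma.if} implicitly assume. In that case $\sum_{\V{x}}\exp\bigl((\vtheta_0\pm\delta\V{e}_j)^T\vphi(\V{x})\bigr)$ is finite, and bounding each term by the sum gives $\delta|\phi_j(\V{x})|\le C-\psi_{\vtheta_0}(\V{x})$.

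Note also that the summability argument uses neither linearity of $\psi$ (put $\psi'_{\vtheta_0}$ in place of $\vphi$) nor the factor $1-z$. In the unconditional discrete setting, any $0\le\xi_{U,f}(z)\le Cz$ yields B-robustness. The factor $1-z$ only matters in the conditional (logistic) setting, where $\bq\to 1$.
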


      The theorem indicates that
      the estimator \eqref{estimator} is B-robust and is not influenced so much by the outlier
      when the function $f$ satisfies the condition
      \eqref{condition.robust}.
      The influence caused by  the outlier can be also problematic under
      the regression or classification tasks.
      For example, if we consider the logistic regression model as
      \begin{equation}
       \bq_{\vtheta}(y|\V{x})=\frac{\exp(\psi_{\vtheta}(\V{x},y))}{\sum_{y'}\exp(\psi_{\vtheta}(\V{x},y'))},
      \end{equation}
      the function $\psi_{\vtheta}(\V{x},y)$ can diverge because of the
      outlier noise $\tilde{\V{x}}$, and in such situation, the property of robustness shown in Theorem
      \ref{theorem.robust}
      becomes effective.

     We find some examples satisfying 
     \eqref{condition.robust}
     in Theorem \ref{theorem.robust}.
   
     \begin{example}
      \label{example.robust1}
     If we employ
     $U(z)=z\log z-z$,    
     $f(z)=\frac{1}{9}|z-1|^3$ satisfies the condition \eqref{condition.robust}
     and the associated divergence is written as
   \begin{align*}
    D(p,q;U,f)
    =&\frac{1}{9}\br{
     |q-1|^3\log\frac{|q-1|^3}{|p-1|^3}-
     |q-1|^3+|p-1|^3
     }
    \\
    \propto &
    KL(|q-1|^3,|p-1|^3).
    \end{align*}
     \end{example}

    \begin{example}
     If we employ
     $U(z)=\frac{1}{2}z^2$,
     $f(z)=\frac{2}{3}\sign (z-1)|z-1|^{3/2}$
     satisfies the condition \eqref{condition.robust}
     and the associated divergence is written as
   \begin{align*}
    D(p,q;U,f)
    =
    \frac{2}{9}
    \br{
    \left\{
    \sign(q-1)|q-1|^{\frac{3}{2}}
    -
    \sign(p-1)|p-1|^{\frac{3}{2}}
    \right\}^2
    }.
   \end{align*}
    \end{example}

\section{Related works}

  \subsection{Relation with \cite{JMLR:v18:15-596}}\label{takekana}
  While the estimator \eqref{estimator} is defined 
  using the deformed Bregman divergence,
  a similar estimator has been proposed using the $\gamma$-divergence 
  to avoid intractable calculation of the normalization constant
  \cite{JMLR:v18:15-596}.
  The $\gamma$-divergence is defined as
\begin{align*}
 D_{\gamma}(p,q)=\frac{1}{1+\gamma}\log \br{p^{1+\gamma}}
+\frac{\gamma}{1+\gamma}\log \br{q^{1+\gamma}}
-\log \br{pq^\gamma}
\end{align*}
where $\gamma >0$.
The divergence satisfies $D_{\gamma}(p,q)\geq 0$, and $D_{\gamma}(p,q)=0$
if and only if $p\propto q$ rather than $p=q$ \cite{fujisawa08:_robus}.
In the limit of $\gamma\to 0$, the divergence converges to the extended KL-divergence.
An estimator constructed by 
applying the empirically localized model for the $\gamma$-divergence
can also bypass the calculation of the normalization constant and 
an asymptotic variance of the estimator coincides with that MLE, {\it i.e.},
the estimator is asymptotically efficient.

Though both estimators can attain the asymptotic efficiency, 
an advantage of the proposed estimator is
flexibility obtained by the deformation such as the efficiency or robustness.

     \subsection{Relation with contrastive divergence}
  In this subsection,
     we discuss a relationship between the proposed method and 
     contrastive divergence.
     Contrastive divergence approximates the calculation of the
     normalization constant $Z_{\vtheta}$ using the technique of 
     MCMC,
     in which a markov chain runs few times from the empirical
     distribution to the targeted statistical model $\bq_{\vtheta}$, as
     \begin{equation*}
      \frac{\partial L(\vtheta)}{\partial \vtheta}
       =
       \br{\tilde{p}\psi'_{\vtheta}}-\br{\bq_{\vtheta}\psi'_{\vtheta}}
       \simeq
       \br{\tilde{p}\psi'_{\vtheta}}-\br{\bar{p}'\psi'_{\vtheta}}
     \end{equation*}
     where $\bar{p}$  is a distribution made by the markov chain.
     On the other hand, the proposed estimator utilizes the empirically localized model rather than MCMC sampling.
     Let us consider the proposed estimator shown in the Example
     \ref{ex.kl} with $\alpha=1$. Then the risk function is written as
     \begin{align*}
      &D(\tilde{p},\trad;U(z)=z\log z-z,f(z)=z)
      \\
      =&Const-
      \nmean \log \trad (\V{x}_i)
      \end{align*}
     and its gradient is written as
     \begin{equation}
      \br{\tilde{p}\psi'_{\vtheta}}-
       \br{\trad\psi'_{\vtheta}}.
     \end{equation}
     The proposed estimator replaces the mean
     $\br{\bq_{\vtheta}\psi'_{\vtheta}}$ of MLE
     with $\br{\trad\psi'_{\vtheta}}$ rather than
     $\br{\bar{p}\psi'_{\vtheta}}$.
     Note that the proposed estimator does not require the approximation
     by MCMC sampling and 
     has favorable statistical
     properties such as Fisher consistency and asymptotic efficiency.

\subsection{Relation with minimum probability flow}
The minimum probability flow (MPF) \cite{barp2019minimum} 
considers a continuous flow between probability distributions and
models the probability evolution 
from the empirical distribution to the target distribution 
using a continuous-time Markov chain, 
evaluating the distribution
along the flow at an infinitesimal time.
An estimator is constructed by minimizing the (approximated) 
KL-divergence between the empirical distribution and the evolved distribution.
Unlike the contrastive divergence, MPF does not require MCMC sampling, 
which is a major computational advantage.
Although MPF also can avoid the intractable computation of the normalization 
constant, its asymptotic efficiency is generally lower than that of the maximum likelihood estimator (MLE) or the proposed estimator \eqref{estimator}.

\section{Extension for Bayesian-like MAP Estimation}
\label{sec:bayesian_extension}

While unnormalized models are computationally efficient for point estimation, 
incorporating prior domain knowledge to the unnormalized model has remained an open challenge. 
We extend the empirical localization framework to establish a computationally tractable Bayesian-like Maximum A Posteriori (MAP) estimation framework
\footnote{A short version of this extension has been presented as a conference paper
 \cite{takenouchi2025bayesian}. We further extend the method to the deformed Bregman divergence and
add a small experiment at the end of this subsection.
}.

We introduce an unnormalized 
model $\eta(\V{x})$ ($0\leq \eta(\V{x}) <q_{\vtheta}(\V{x})$ ) on $\C{X}$
representing prior knowledge
(e.g., a constant function $\eta(\V{x}) = c$ representing 
a noninformative uniform prior over a finite domain).
By deforming the empirically localized model, 
we define the regularized localized model $\tilde{r}_{\alpha,{\vtheta},\eta}(\V{x})$ as
\begin{equation}
\tilde{r}_{\alpha,{\vtheta},\eta}(\V{x}) = \frac{\tilde{p}(\V{x})^\alpha \big( q_{\vtheta}(\V{x}) - \eta(\V{x}) \big)^{1-\alpha}}{
\br{ \tilde{p}^\alpha \big( q_{\vtheta} - \eta \big)^{1-\alpha}}}.
\label{eta-model}
\end{equation}
Note that the condition $\eta(\V{x})<q_{\vtheta}(\V{x})$ yields a restriction for 
the feasible space of the parameter and then
implementing $\max \{q_{\vtheta}(\V{x})-\eta(\V{x}),\epsilon\}$  with a small non-negative constant $\epsilon$
would be a practical approach.
The model \eqref{eta-model} can be easily calculated because the denominator of \eqref{eta-model}
is rewritten as
\begin{align}
 \br{\tilde{p}^\alpha (q_{\vtheta}-\eta)^{1-\alpha}}
=
\begin{cases}
 \sum_{\V{x}\in\{O\}}\left(\frac{n_{\V{x}}}{n}\right)^\alpha
 \left(q_{\vtheta}(\V{x})-\eta(\V{x})\right)^{1-\alpha} & \V{x}\in \C{Z}, \\
0 & \mbox{ otherwise.}
\end{cases}
\end{align}

Analogous to \eqref{estimator}, 
the proposed Bayesian-like estimator $\hat{{\vtheta}}_{U,f,\eta}$ is defined by minimizing the deformed Bregman divergence:
\begin{equation}
\hat{{\vtheta}}_{U,f,\eta} = \argmin_{{\vtheta}} \mathcal{D} \big( \tilde{r}_{\alpha,{\vtheta},\eta}, \tilde{r}_{\alpha',{\vtheta},\eta}; U, f \big).
\label{eq:bayesian_estimator}
\end{equation}

The divergence $D_{U}(\traed,\trae;U,f)$ is $0$ if 
$\trae(\V{x})=\traed(\V{x})$ holds, or equivalently
\begin{align}
 \tp(\V{x})=
 \left(
 \frac{\br{\tp^\alpha (q_{\vtheta}-\eta)^{1-\alpha}}}{\br{\tp^{\alpha'} (q_{\vtheta}-\eta)^{1-\alpha'}}}
 \right)^{\frac{1}{\alpha-\alpha'}}
 \left(q_{\vtheta}(\V{x})-\eta(\V{x})\right)
\label{equiv.condition0}
\end{align}
holds.
Terms $\br{\tp^\alpha (q_{\vtheta}(\V{x})-\eta)^{1-\alpha}}$ and 
$\br{\tp^{\alpha'} (q_{\vtheta}(\V{x})-\eta)^{1-\alpha'}}$
do not depend on $\V{x}$ and then
the condition \eqref{equiv.condition0} is rewritten as 
\begin{align}
 \tp(\V{x})=\frac{q_{\vtheta}(\V{x})-\eta(\V{x})}{\br{q_{\vtheta}-\eta}},
\label{equiv.condition0.2}
\end{align}
or equivalently
\begin{align}
 \bq_{\vtheta} (\V{x})=\left(1-\frac{\br{\eta}}{Z_{\vtheta}}\right)\tp(\V{x})+
\frac{\br{\eta}}{Z_{\vtheta}}
\frac{\eta(\V{x})}{\br{\eta}}
=
\left(1-\frac{\br{\eta}}{Z_{\vtheta}}\right)\tp(\V{x})+
\frac{\br{\eta}}{Z_{\vtheta}}
\bar{\eta}(\V{x})
\label{equiv.condition.bayes}
\end{align}
where $\bar{\eta}(\V{x})=\frac{\eta(\V{x})}{\br{\eta}}$ 
is a normalized version of $\eta(\V{x})$, satisfying $\br{\bar{\eta}}=1$.
The estimated model $\bq_{\vtheta}(\V{x})$ is represented as 
a mixture of the original distribution $\tp(\V{x})$
and the normalized $\bar{\eta}(\V{x})$.
The ratio $\frac{\br{\eta}}{Z_{\vtheta}}$ of mixture is less than $1$ because of the condition
$\eta(\V{x})<q_{\vtheta}(\V{x})$ and
we can control the ratio by multiplying a positive constant $c$ as $c\eta(\V{x})$.
Note that the normalized version $\bar{\eta}(\V{x})$ is not influenced by $c$.
The relation \eqref{equiv.condition.bayes} implies that 
$\vhtheta_{U,f,\eta}$ generally does not have the Fisher consistency 
which the original estimator \eqref{estimator} satisfies.
Some properties of such kind of mixture models have been discussed in
\cite{Copas88,TakenouchiEguchi02}
in the context of robust classification.
Also 
efficiency or robustness of estimator \eqref{eq:bayesian_estimator}
depends on a choice of functions $U$ and $f$.

A typical choice of the function $\eta(\V{x})$ is a constant not depending on $\V{x}$.
With this choice, \eqref{equiv.condition.bayes} is a mixture of the original distribution 
$\tp(\V{x})$ and the uniform distribution on $\C{X}$, and 
this may correspond to the MAP estimator with the non-informative prior distribution.
In the following, we consider 
the abstract version of the divergence 
$D(r_{\alpha,\vtheta,\eta},r_{\alpha',\vtheta,\eta};U,f)$
which is defined with the underlying distribution $p(\V{x})$ rather than the empirical distribution $\tp(\V{x})$,
and
investigate how the estimator $\vhtheta_{U,f,\eta}$ is affected by $\eta(\V{x})$.
\begin{proposition}
Assume that $p(\V{x})=\bq_{\vtheta_0}(\V{x})$ holds. Then
the estimator
$\vhtheta_{U,f,\eta}=\argmin_{\vtheta}D(r_{\alpha,\vtheta,\eta},r_{\alpha',\vtheta,\eta};U,f)$ satisfies 
\begin{align}
 \bq_{\vhtheta_{U,f,\eta}}(\V{x})=
\left(1-\frac{\br{\eta}}{{Z_{\vhtheta_{U,f,\eta}}}}\right)\bq_{\vtheta_0}(\V{x})+
\frac{\br{\eta}}{{Z_{\vtheta_{U,f,\eta}}}}
\bar{\eta}(\V{x}).
\end{align}
Also assume that 
$||\vhtheta_{U,f,\eta}-\vtheta_0||\ll 1$ holds.
Then we observe 
\begin{align}
 \vhtheta_{U,f,\eta}-\vtheta_0=V_{\vtheta_0}^{-1}\frac{\br{\eta}}{Z_{\vtheta_0}}
 \left\{\br{\bar{\eta}\psi'_{\vtheta_0}}-
\mu_{\vtheta_0}
\right\},
 \label{asymptotic}
\end{align}
where $\mu_{\vtheta_0}=\br{p\psi'_{\vtheta_0}}$ and 
 $V_{\vtheta_0}=\br{p(\psi'_{\vtheta_0}-\mu_{\vtheta_0})(\psi'_{\vtheta_0}-\mu_{\vtheta_0})^T}$ are
mean and variance of $\psi'_{\vtheta_0}(\V{x})$, respectively.
\end{proposition}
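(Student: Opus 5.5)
We first establish the mixture identity, then linearize it around $\vtheta_0$.

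\textbf{Step 1: the mixture identity.} The argument repeats the derivation of \eqref{equiv.condition0}--\eqref{equiv.condition.bayes} at population level, with $\tp$ replaced by $p=\bq_{\vtheta_0}$. Since $\Phi_{U,f}\geq 0$, the population risk $D(r_{\alpha,\vtheta,\eta},r_{\alpha',\vtheta,\eta};U,f)$ is non-negative. It vanishes exactly when $r_{\alpha,\vtheta,\eta}=r_{\alpha',\vtheta,\eta}$ pointwise.

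Write $g_{\vtheta}=q_{\vtheta}-\eta>0$. Equality of the two localized models means $p^{\alpha-\alpha'}g_{\vtheta}^{\alpha'-\alpha}$ is constant in $\V{x}$. Because $\alpha\neq\alpha'$, this gives $p\propto g_{\vtheta}$, and after normalization $p=g_{\vtheta}/\br{g_{\vtheta}}$. Dividing by $Z_{\vtheta}$ and using $\br{g_{\vtheta}}=Z_{\vtheta}-\br{\eta}$ yields
\[
\bq_{\vtheta}=\left(1-\frac{\br{\eta}}{Z_{\vtheta}}\right)p+\frac{\br{\eta}}{Z_{\vtheta}}\bar{\eta}.
\]
I take as an implicit assumption, as the paper does, that the minimizer attains zero risk, i.e.\ that such a $\vtheta$ exists in the model. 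Substituting $p=\bq_{\vtheta_0}$ and $\vtheta=\vhtheta_{U,f,\eta}$ gives the first claim.

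\textbf{Step 2: linearization.} Put $\V{\delta}=\vhtheta_{U,f,\eta}-\vtheta_0$ and $\lambda=\br{\eta}/Z_{\vhtheta_{U,f,\eta}}$. Rewrite the identity as $\bq_{\vhtheta}-\bq_{\vtheta_0}=\lambda(\bar{\eta}-\bq_{\vtheta_0})$. Then multiply both sides by $\psi'_{\vtheta_0}$ and sum over $\C{X}$. This choice of test function reproduces the MLE score equation, so $V_{\vtheta_0}$ will appear.

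The left side expands via $\partial\bq_{\vtheta}/\partial\vtheta=\bq_{\vtheta}(\psi'_{\vtheta}-\br{\bq_{\vtheta}\psi'_{\vtheta}})$:
\[
\br{(\bq_{\vhtheta}-\bq_{\vtheta_0})\psi'_{\vtheta_0}}\simeq\br{\bq_{\vtheta_0}\psi'_{\vtheta_0}(\psi'_{\vtheta_0}-\mu_{\vtheta_0})^T}\V{\delta}=V_{\vtheta_0}\V{\delta}.
\]
Centering is free here because $\br{\bq_{\vtheta_0}(\psi'_{\vtheta_0}-\mu_{\vtheta_0})}=0$. The right side equals $\lambda\{\br{\bar{\eta}\psi'_{\vtheta_0}}-\mu_{\vtheta_0}\}$, with $\mu_{\vtheta_0}=\br{p\psi'_{\vtheta_0}}$.

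\textbf{Step 3: order of the terms.} The right side is already first order in the small quantity, so I replace $Z_{\vhtheta}$ by $Z_{\vtheta_0}$ in $\lambda$; the correction is $O(\lambda\|\V{\delta}\|)$, which is second order. Inverting $V_{\vtheta_0}$, assumed nonsingular (identifiability), gives \eqref{asymptotic}.

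\textbf{Main obstacle.} The delicate point is making this order bookkeeping consistent. The assumption $\|\V{\delta}\|\ll1$ must really come from $\br{\eta}/Z_{\vtheta_0}$ being small, for instance by scaling $\eta\to c\eta$ with small $c$. Only then is $\V{\delta}=O(\lambda)$, so that the dropped terms, quadratic in $\V{\delta}$ and of order $\lambda\V{\delta}$, are both $o(\lambda)$.

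I would state this scaling explicitly and apply the implicit function theorem to $F(\vtheta,c)=\br{(\bq_{\vtheta}-(1-\lambda)\bq_{\vtheta_0}-\lambda\bar{\eta})\psi'_{\vtheta_0}}$ at $(\vtheta_0,0)$. Its Jacobian in $\vtheta$ at that point is $V_{\vtheta_0}$, which makes the first-order expansion rigorous.
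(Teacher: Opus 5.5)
Your proof is correct and is essentially the paper's argument: you obtain the mixture identity from zero risk (an implicit assumption you rightly make explicit), expand to first order at $\vtheta_0$, and test against the score so that $V_{\vtheta_0}$ appears on the left. The paper instead linearizes the unnormalized relation $q_{\vhtheta_{U,f,\eta}}=\bq_{\vtheta_0}\br{q_{\vhtheta_{U,f,\eta}}-\eta}+\eta$ and divides by the fixed constant $Z_{\vtheta_0}$, so $\br{\eta}/Z_{\vtheta_0}$ appears directly and your Step~3 (and the small-$c$ scaling) is not needed. Your version does not need the scaling either: the exact identity gives $\lambda\{\br{\bar{\eta}\psi'_{\vtheta_0}}-\mu_{\vtheta_0}\}=V_{\vtheta_0}\V{\delta}+o(\|\V{\delta}\|)=O(\|\V{\delta}\|)$, so replacing $Z_{\vhtheta_{U,f,\eta}}$ by $Z_{\vtheta_0}$ costs only $O(\|\V{\delta}\|^2)$ under the hypothesis $\|\V{\delta}\|\ll 1$ alone.
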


\begin{proof}
 By expanding $q_{\vhtheta_{U,f,\eta}}(\V{x})=\exp\left(\psi_{\vhtheta_{U,f,\eta}}(\V{x})\right)$ at $\vtheta_0$, we get
\begin{align}
 q_{\vhtheta_{U,f,\eta}}(\V{x})&=q_{\vtheta_0}(\V{x})+q_{\vtheta_0}(\V{x})\psi'_{\vtheta_0}(\V{x})^T
(\vhtheta_{U,f,\eta}-\vtheta_0)+{o}(||\vhtheta_{U,f,\eta}-\vtheta_0||).
\end{align}
Using the relation \eqref{equiv.condition0.2}, we have
\begin{align}
&q_{\vhtheta_{U,f,\eta}}(\V{x})\nonumber \\
=&\bq_{\vtheta_0}(\V{x})
\br{q_{\vhtheta_{U,f,\eta}}-\eta}+\eta(\V{x})\nonumber \\
=&
\bq_{\vtheta_0}(\V{x})
\br{q_{\vtheta_0}(\V{x})+q_{\vtheta_0}(\V{x})\psi'_{\vtheta_0}(\V{x})^T
(\vhtheta_{U,f,\eta}-\vtheta_0)-\eta}+\eta(\V{x})+{o}(||\vhtheta_{U,f,\eta}-\vtheta_0||).
\end{align}
By dividing these equations with ${Z_{\vtheta_0}}$ and ignoring the term ${o}(||\vhtheta_{U,\eta}-\vtheta_0||)$, 
we observe that
\begin{align}
& p(\V{x})+p(\V{x})\psi'_{\vtheta_0} (\V{x})^T(\vhtheta_{U,f,\eta}-\vtheta_0)\\
=&p(\V{x})\left(
1+\br{p\psi'_{\vtheta_0}}^T (\vhtheta_{U,f,\eta}-\vtheta_0)-
\frac{\br{\eta}}{{Z_{\vtheta_0}}}
\right)+\frac{\eta(\V{x})}{{Z_{\vtheta_0}}},
\end{align}
or equivalently
\begin{align}
 p(\V{x})\left(
\psi'_{\vtheta_0}(\V{x})
-\br{p\psi'_{\vtheta_0}}
\right)^T(\vhtheta_{U,f,\eta}-\vtheta_0)
=
\frac{\br{\eta}}{{Z_{\vtheta_0}}}
\left(
\bar{\eta}(\V{x})-p(\V{x})
\right)
\end{align}
holds.
Multiplying both sides by $\psi'_{\vtheta_0}(\V{x})
-\br{p\psi'_{\vtheta_0}}$ and summing up with respect to $\V{x}$, we conclude
\eqref{asymptotic}.
\end{proof}

The proposition implies that
degree of influence of $\eta$ is controlled by the ratio $\frac{\br{\eta}}{Z_{\vtheta_0}}$
and is modified in the direction from $\tp$ to $\bar{\eta}$.

\begin{remark}
 Another type of formulation 
\begin{align*}
 \frac{\left( \tp(\V{x}) +\eta(\V{x}) \right)^\alpha q_{\vtheta}(\V{x})^{1-\alpha} }{
\br{ (\tp+\eta)^\alpha q_{\vtheta}^{1-\alpha}}}
\end{align*}
also work as regularized estimator, 
but it requires summation over all points on domain $\C{X}$, implying 
the loss of tractability of calculation. 
\end{remark}

Figure \ref{fig2} shows behaviors of the estimator \eqref{eq:bayesian_estimator}.
We employed Bernoulli distribution $ q_{\theta}(x)=\exp(\theta x)$ 
on $\C{X}=\{+1,-1\}$,
and observed how the estimator is influenced by
the prior function $\eta(x)$.
We set $n_{+}=60, n_{-}=20,\eta(+1)=0.1,\eta(-1)=0.6$
and observed behavior of the proposed estimator by changing 
strength $c$ of the prior function $\eta$ as $c=0.2,0.5,1,1.5$.
The estimator first considers a projection from 
the empirical distribution (a red point ($\circ$)) to 
$q_{\vtheta}(x)-\eta(x)$ (a blue point on a blue dashed line).
The projected point is pulled back to Bernoulli distribution by adding $\eta(x)$
and its projection to the probability space is the proposed estimator ($+$).

Examples in Figure \ref{fig2} implies that
higher magnitude of $c$ has 
stronger effect of the prior function $\eta$ ($\triangle$) for the estimator.
Also note that a feasible set of parameter (or equivalently probability distribution)
represented by red dashed line is restricted.
 \begin{figure}[h]
\begin{center}
\includegraphics[width=.49\columnwidth,bb=0 0 779 779]{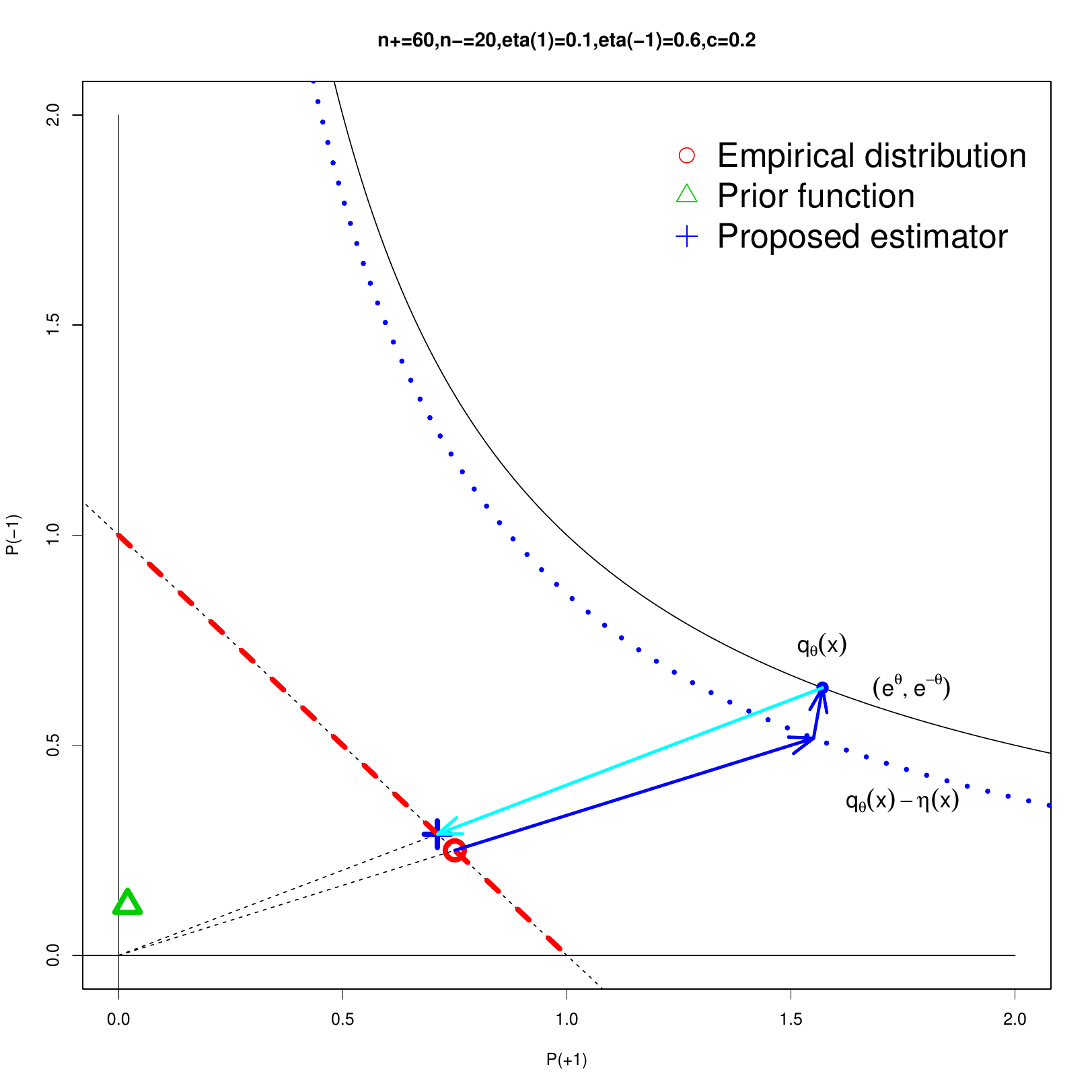}
\includegraphics[width=.49\columnwidth,bb=0 0 779 779]{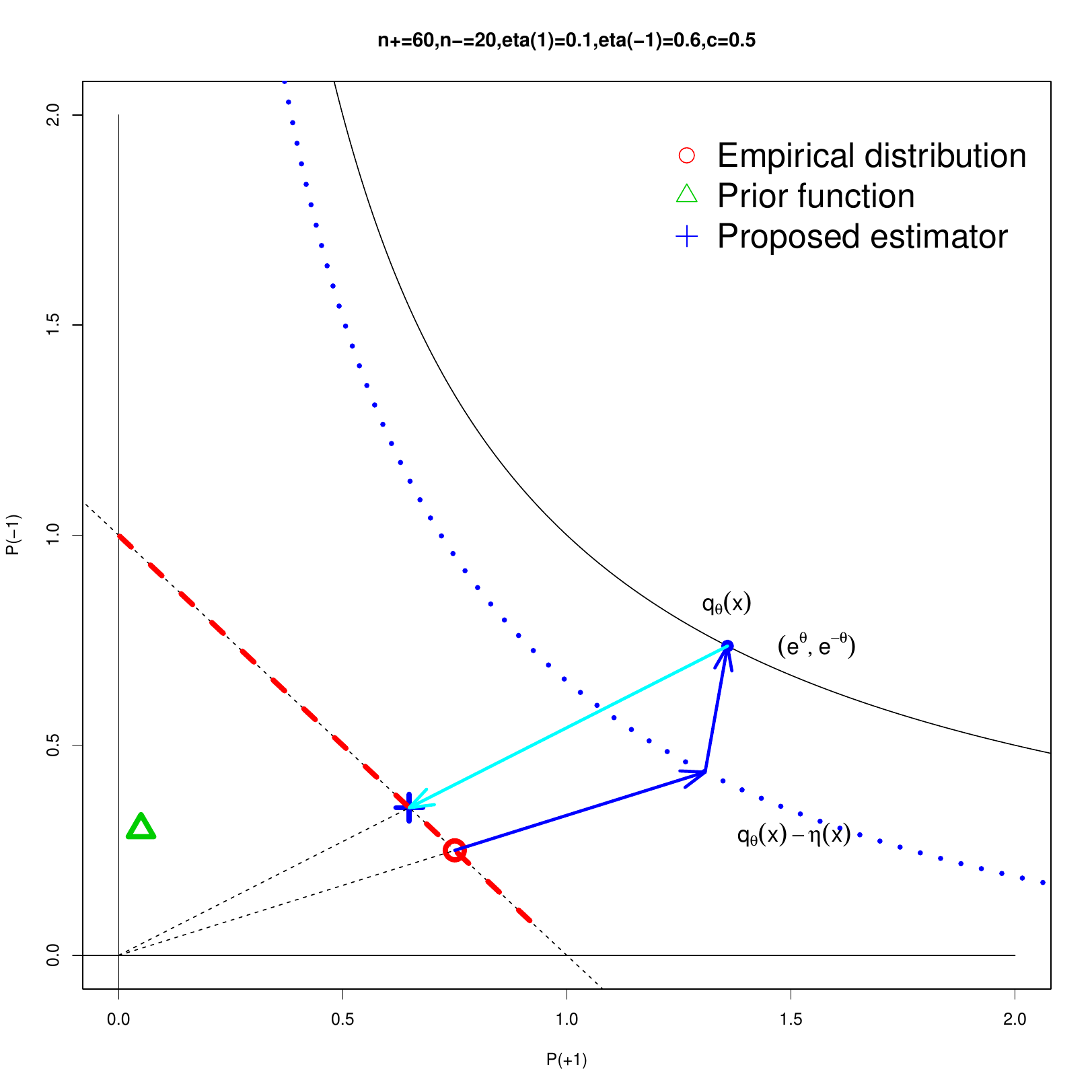}
\includegraphics[width=.49\columnwidth,bb=0 0 779 779]{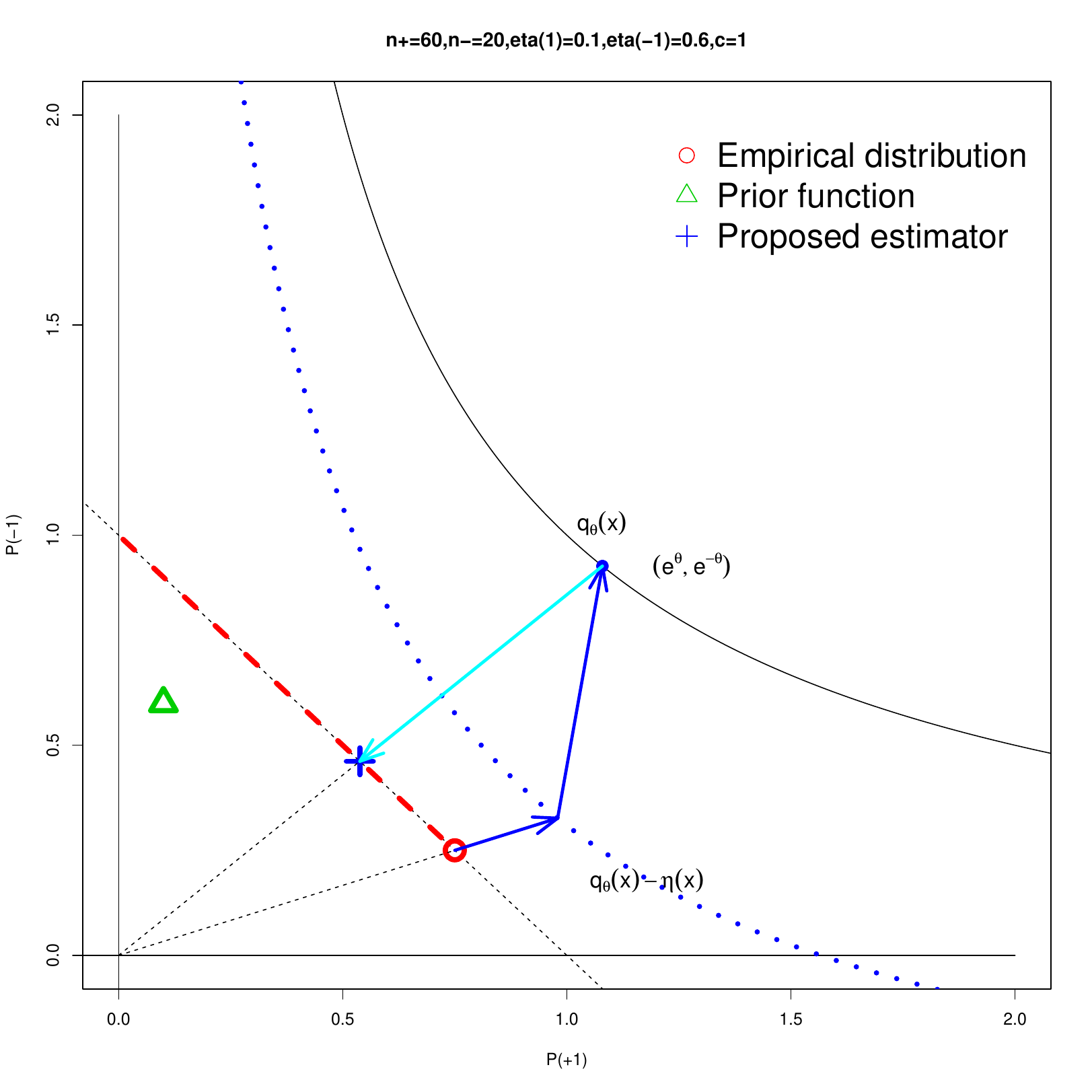}
\includegraphics[width=.49\columnwidth,bb=0 0 779 779]{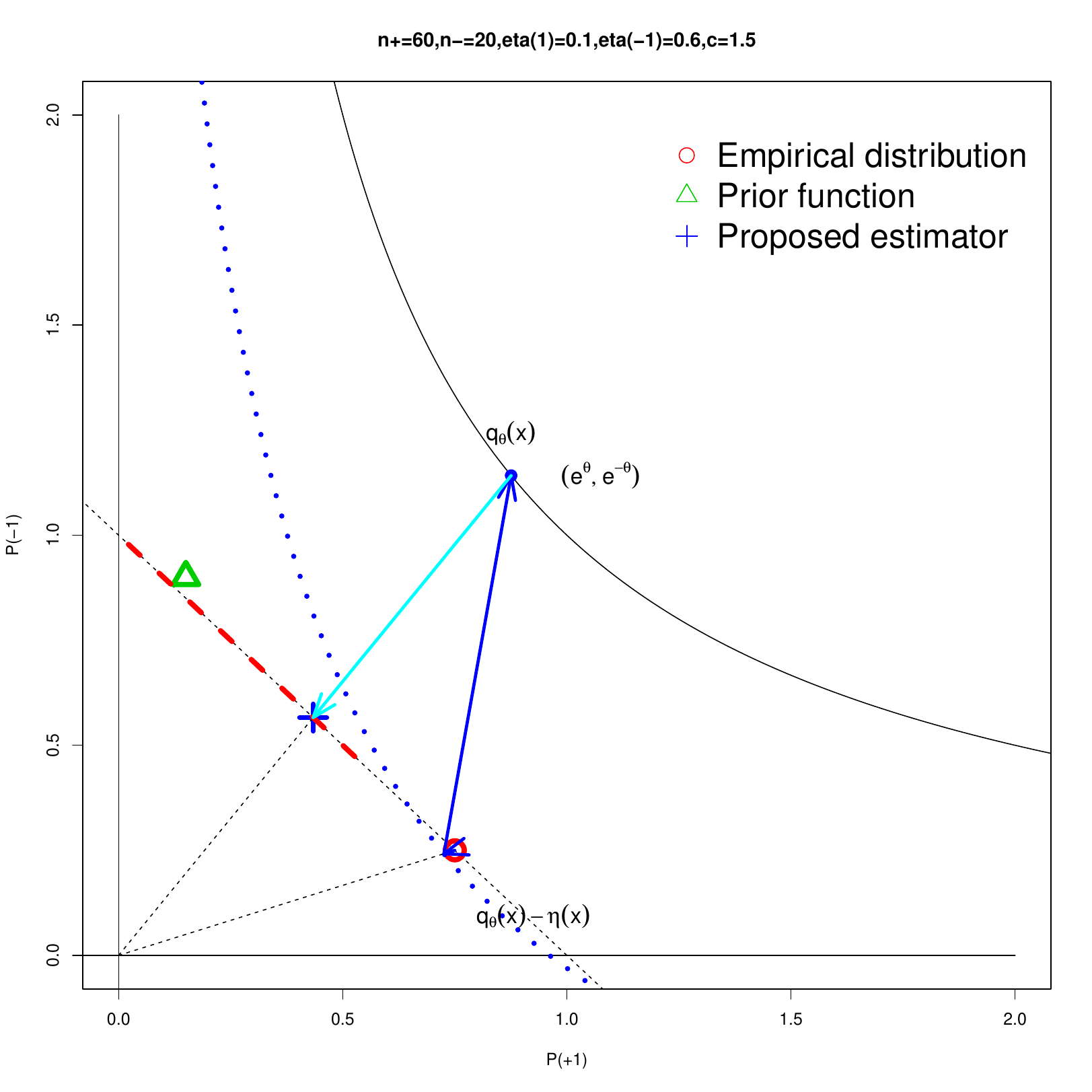}

 \caption{Examples of Bayesian-like estimation
 for  $c=0.2,0.5,1,1.5$ (upper left, upper right, lower left, lower right).
 A red point ($\circ$),  a green point ($\triangle$), and
a blue point ($+$) are
 the empirical  distribution,
 the function $\eta(x)$ representing prior 
 information, and 
 the proposed estimator, respectively.
 A black dashed line is a space of probability distribution on $\{+1,-1\}$.
 A black line represents a manifold of Bernoulli distributions
 and 
 a blue dashed line is the modified model $q_{\vtheta}(x)-\eta(x)$.
 A red dashed line is a feasible set constrained by
$\eta(x)<q_{\vtheta}(x)$.
 }
   \label{fig2}
\end{center}
 \end{figure}
    \section{Experiments}
    In this section, two small experiments have been done
    to confirm statistical efficiency (Subsection 6.1) and
    robustness against outlier (Subsection 6.2)
    of the proposed estimator.
    \subsection{Efficiency}
    We compared variance of the proposed estimator with these
    of MLE and 
    the estimator based on 
    the homogeneous H{\" o}lder divergence described in subsection \ref{takekana}
    \citep{JMLR:v18:15-596}
    using a synthetic dataset.
    The dataset was generated from Boltzmann distribution on
    $\{+1,-1\}^{10}$ whose parameter $\vtheta^{\ast}$ follows 
    $N\left(0,\frac{1}{10}I\right)$ where
    $I$ is a $10$-dimensional identity matrix.

    In the first experiments, we observed behaviour of 
    the proposed estimator against sample size $n$.
    We generated $n=50,200,800,3200,12800$ examples from the Boltzmann 
    distribution $50$ times and observed the mean square errors (MSEs).
    We employed the generating function $U(z)=\exp(z)$
    and $f(z)=\log z$ whose resultant estimator is asymptotically efficient,
    and set $\alpha=0.01$ and $\alpha'=0.99$.
    The result is shown in Figure \ref{fig.mse2n}.
    As the sample size $n$ increases, 
    the MSEs of the estimator decreases and
    the computational time increases.
This is because a size of domain $\C{Z}$ of observed examples also increases
as the sample size $n$ increases.

 \begin{figure}[ht]
  \vspace{-3mm}
\begin{center}
 \includegraphics[width=.49\columnwidth,bb=0 0 503 503]{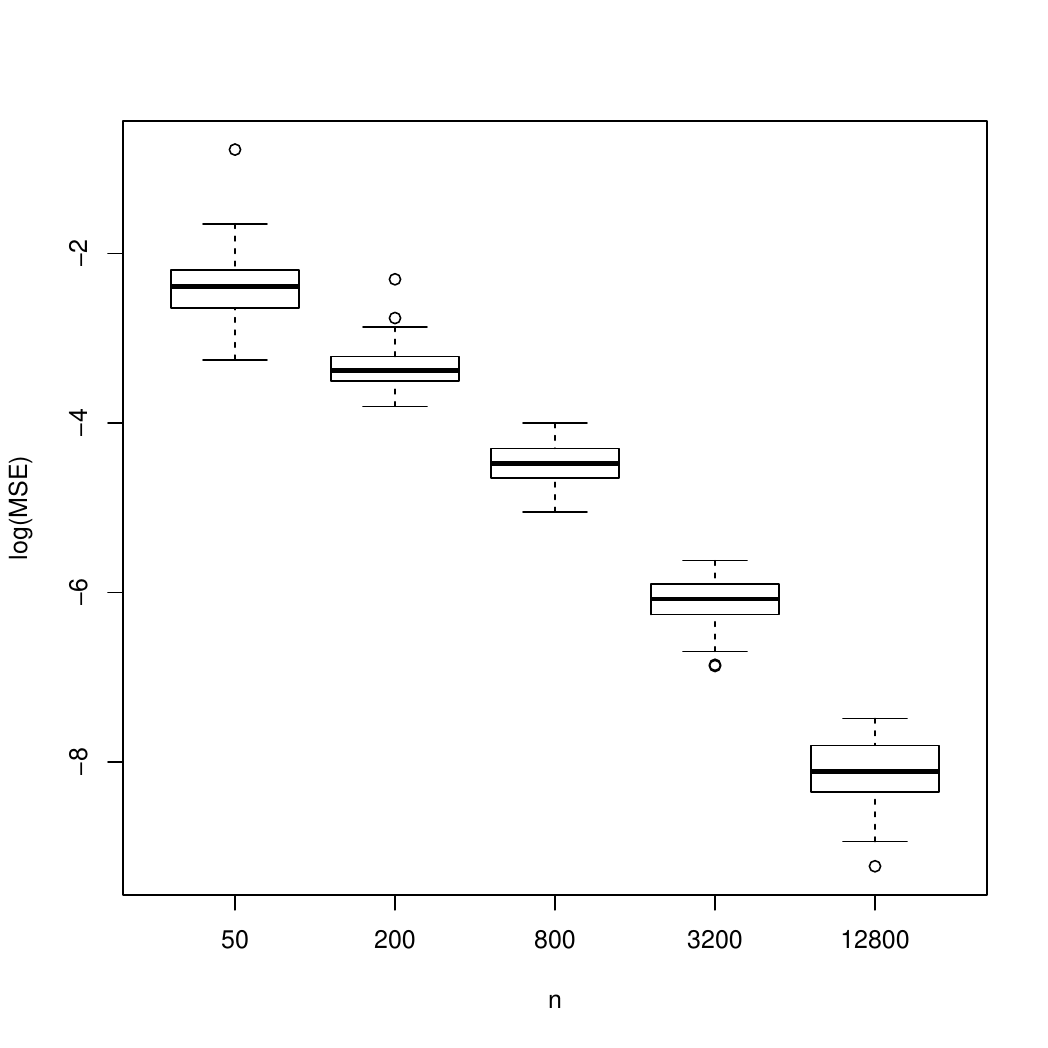}
 \includegraphics[width=.49\columnwidth,bb=0 0 503 503]{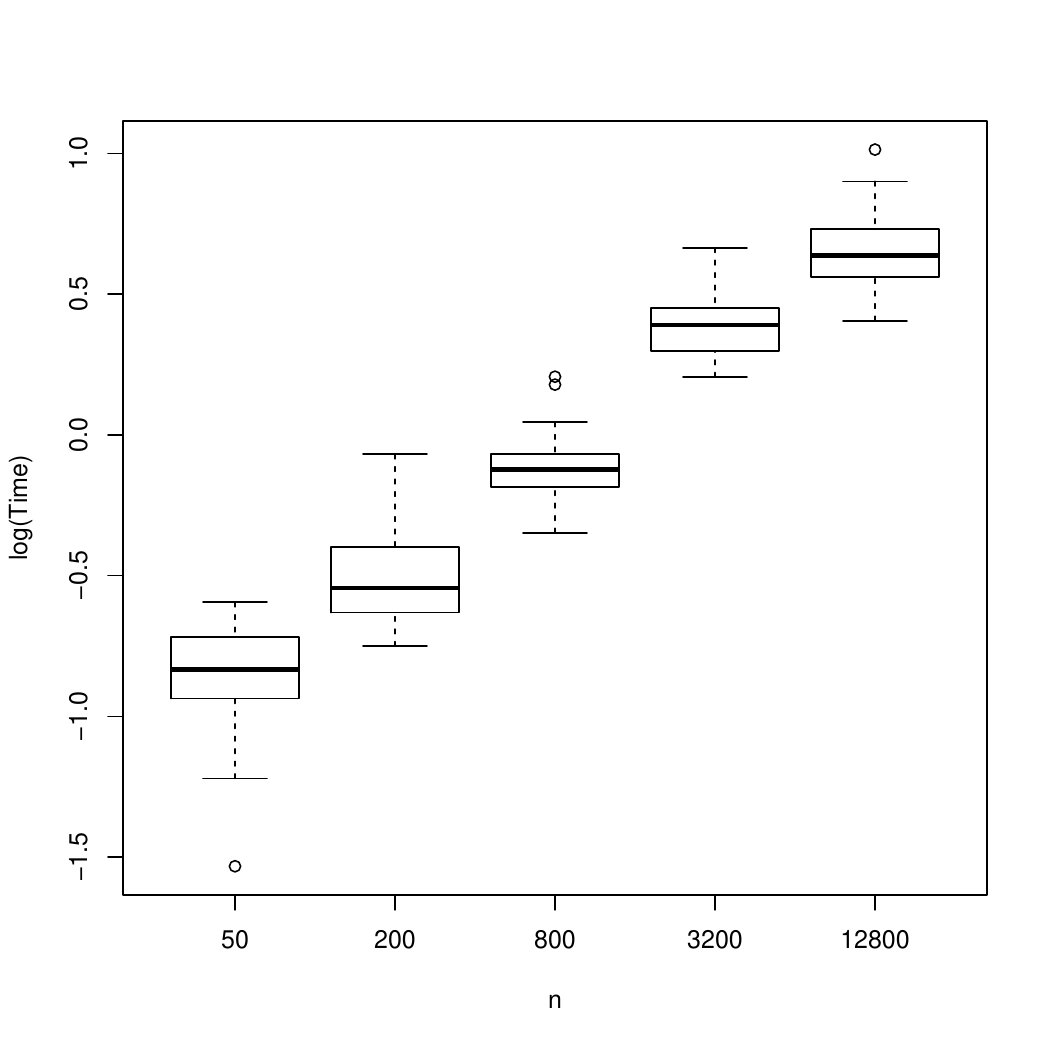}

 \caption{(a)Boxplot of MSEs of estimators in log-scale, against sample size $n$. 
 (b)Boxplot of computational times of estimators in log-scale against sample size $n$.
 }
   \label{fig.mse2n}
\end{center}
 \end{figure}

    Next, we investigated influence of choice of the generating function $U$ 
    and deformation $f$ for performance of the proposed estimator.
    We generated a dataset containing $3200$ examples 
    from the same Boltzmann distribution as above
    $50$ times,
    and compared the MSEs and computational times of 
    the proposed estimator with
    these of MLE and the estimator based on 
    the homogeneous H{\" o}lder divergence
    which is abbreviated as ``H{\" o}lder''.
    We set $\alpha=0.01$ and $\alpha'=0.99$, and 
    compared the following generating functions:
    \begin{enumerate}
     \item $U(z)=\exp(z)$: KL-divergence

     \item $U(z)=z^2/2$: Hellinger distance or equivalently 
	   $\beta$-divergence with $\beta=1$.

     \item $U(z)=-\log (-z)$: Itakura-Saito distance
    \end{enumerate}
    shown in Examples in Section \ref{proposed}.
    For the function $f$, we employed the statistical version of $f$ ($f=(U')^{-1}$)
    ~\cite{Murata_etal02} for which we can directly plug-in the empirical distribution,
    and the efficient version of $f$ presented in Theorem
    \ref{theorem.efficiency} for each function $U$. Details including
    abbreviations of combinations are shown in
    Table \ref{tbl.exp}.
    Note that for $U(z)=\exp(z)$, the statistical version of $f$
    coincides with the efficient
    version and for $U(z)=-\log (-z)$, we employed $f_+(z)=-\exp(2\sqrt{z})$
    because $D(p,q;U,f_+)=D(q,p;U,f_-)$ holds (see Example \ref{example.is}).
     \begin{table}[h]
\caption{Combination of $U$ and $f$, and corresponded abbreviations.}
      \label{tbl.exp}
\begin{center}
\begin{small}
\begin{sc}
       \begin{tabular}{|c|c|c|}
     \hline
      $U$ & Stat. ver. of $f$ & Efficient ver. of $f$  \\
      \hline
      $\exp(z)$ & KL: $\log z$ & KL: $\log z$ \\
      \hline
     $z^2/2$ & Hel: $z$ & F-Hel: $2\sqrt{z}$ \\       \hline
      $-\log (-z)$ & IS: $-1/z$ & F-IS: $-\exp(2\sqrt{z})$
		\\ \hline
       \end{tabular}
 \end{sc}
\end{small}
\end{center}
\end{table}

     Figure \ref{fig1}(a) shows a boxplot of MSEs between
     $\vtheta^{\ast}$ and each estimator
     over $50$ trials, in log-scale.
 \begin{figure}[ht]
  \vspace{-3mm}
\begin{center}
 \includegraphics[width=.49\columnwidth,bb=0 0 503 503]{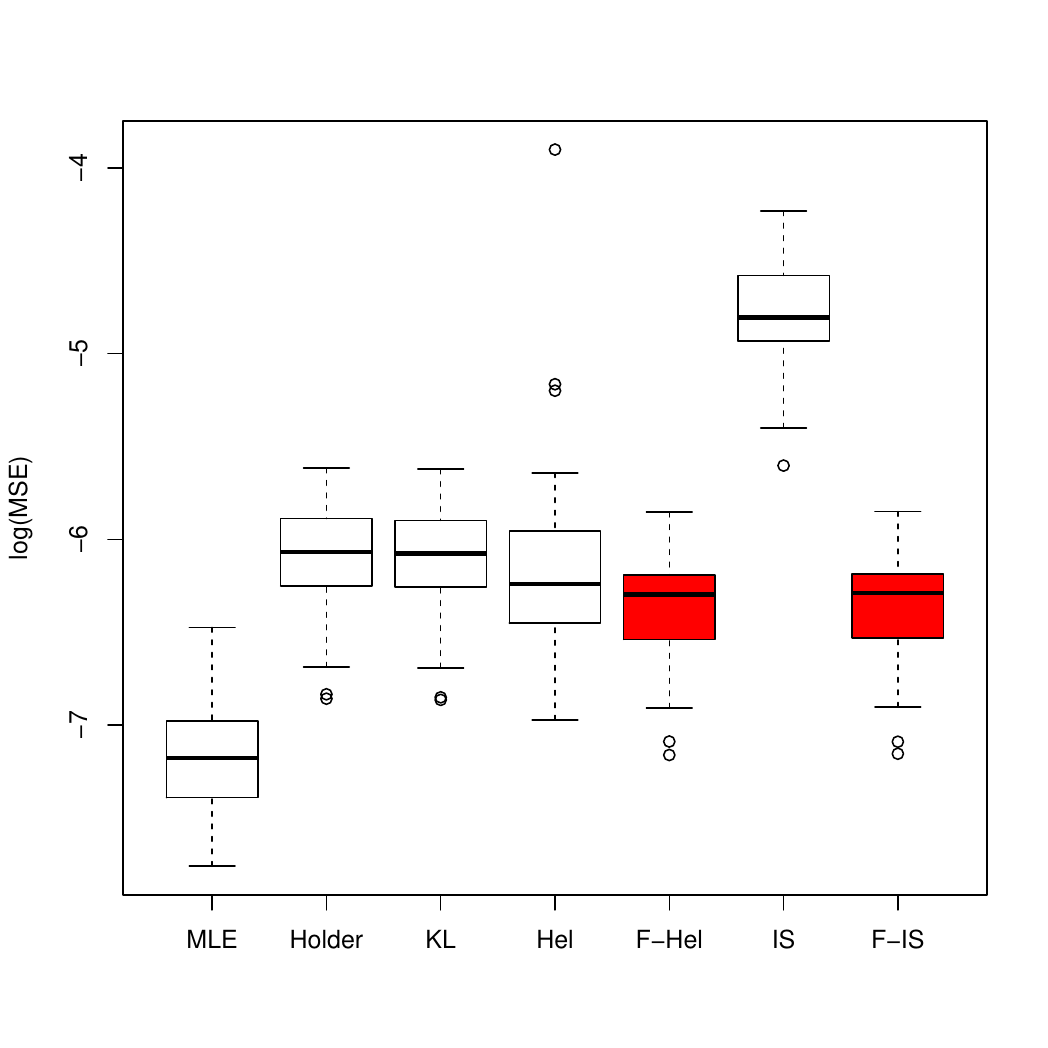}
 \includegraphics[width=.49\columnwidth,bb=0 0 503 503]{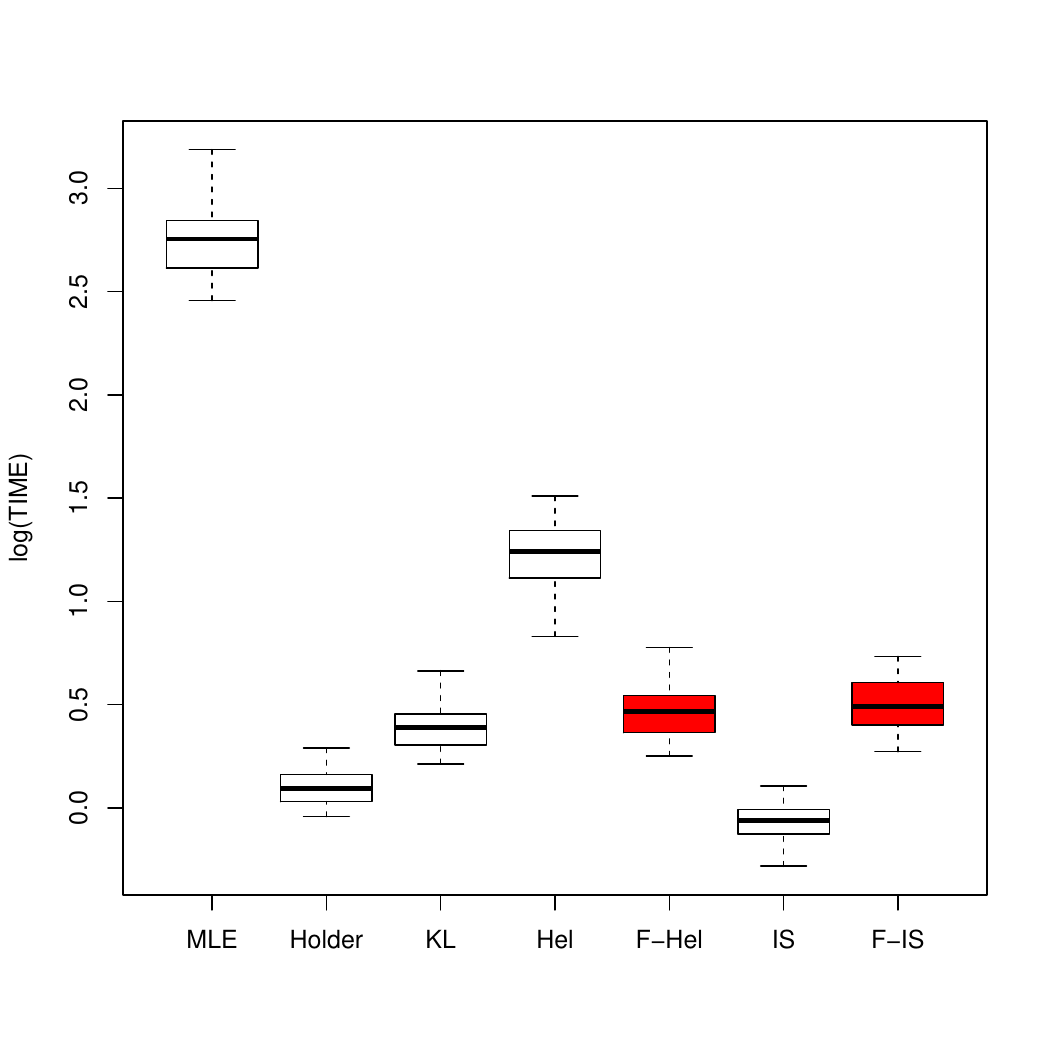}
 \caption{(a)Boxplot of MSEs of estimators, in log-scale. 
 Efficient version of $f$s are colored red.
 (b)Boxplot of computational times of estimators, in log-scale.
 Efficient version of $f$s are colored red.
 }
   \label{fig1}
\end{center}
 \end{figure}
	 Regardless of choice of the function $U$,
	 efficient versions of $f$ improve performance compared with 
	 the statistical version of $f$ and
	 attain comparable estimation error
     	 with MLE.
	 Figure \ref{fig1}(b) indicates a boxplot of computational
     	 times of estimators over $50$ trials, in log-scale.
	 We observe that 
	 computational time of the proposed estimator
     	 is drastically reduced compared with that of MLE while
	 the efficient version of proposed estimators maintain the
	 same level of estimation error with MLE.
	 This is because
	 the proposed estimator does not require the computation of the
     	 normalization constant.
	 Applying one-step estimator \cite{vandervaart1998as}
	 to the proposed estimator might further improve performance.
	 
	  \subsection{Robustness}
	  We investigated robustness of the proposed estimator shown in Section
	  \ref{sec.robust} with small synthetic datasets.
	  In the first experiment,
	  we generated datasets containing $n=25,50,100,200,400,800$
	  examples $50$ times,
	  from geometric distribution $\bq_{\theta_0}(x)=(1-\theta_0)
	  \theta_0^{x-1}$ on $\C{X}=\{1,2,\ldots\}$ and compared the
	  robust version of proposed estimator shown in Example
	  \ref{example.robust1} with 
	  $\alpha=0.01,\alpha'=0.99$ to MLE.
	  Figure \ref{fig.robust1} shows 
	  a boxplot of MSEs of MLE and the proposed estimator
	  against sample size $n$, in log-scale.
	  Note that in this experiment, 
	  MLE outperforms the proposed estimator
	  because the proposed estimator is not
	  asymptotically efficient and 
	  datasets is not contaminated by outlier noise.	   
	  In the second experiment, we added an outlier noise 
 	  ($2\times$ (average of examples without outlier)) 
	  to an example
	  and observed behaviors of estimators, whose result
	  is shown in Figure \ref{fig.robust2}.
	  As predicted by the theory, the robust version of proposed
	  estimator is not so influenced while MLE gets worse because of the outlier noise.

	  	 \begin{figure}[ht]
		  \vspace{-3mm}
	  \begin{center}
\includegraphics[width=.49\columnwidth,bb=0 0 503 503]{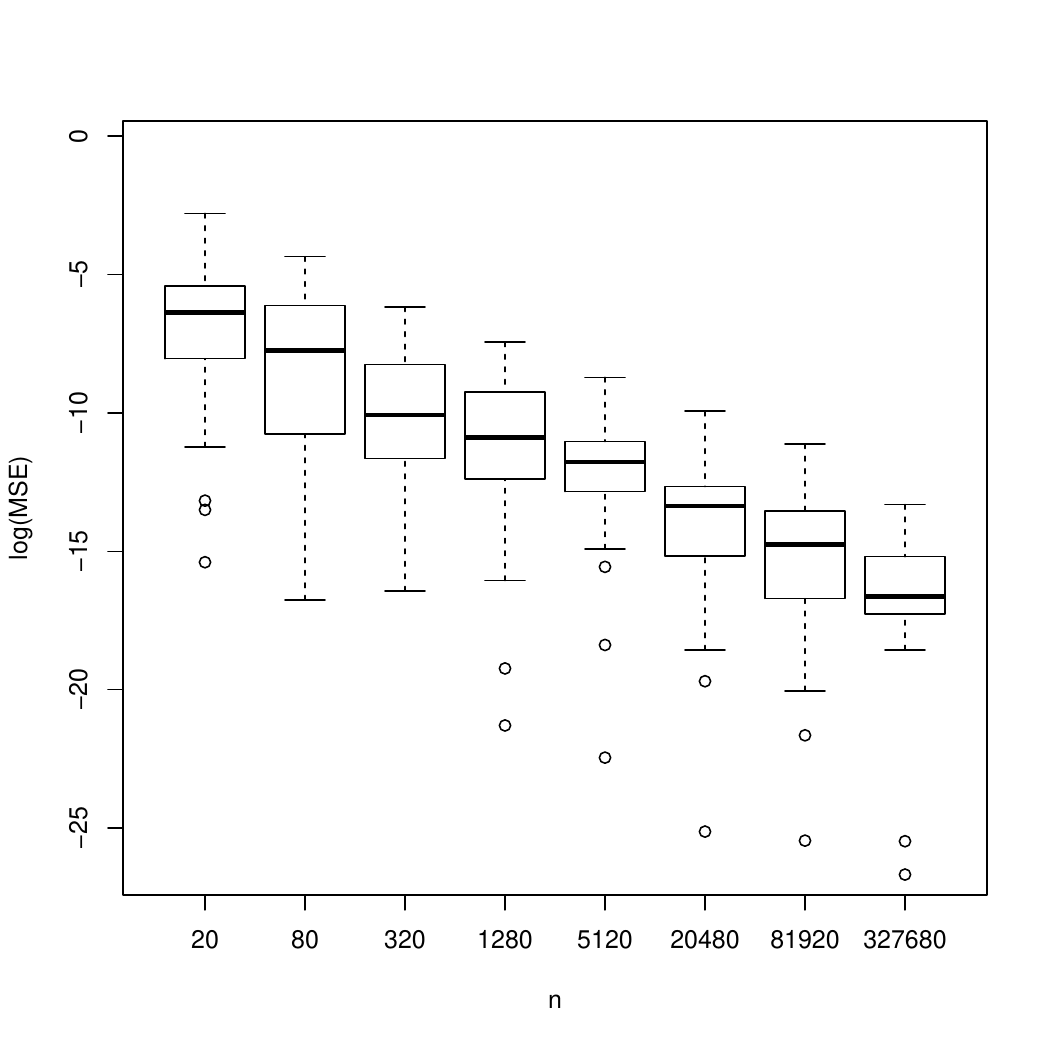}
\includegraphics[width=.49\columnwidth,bb=0 0 503 503]{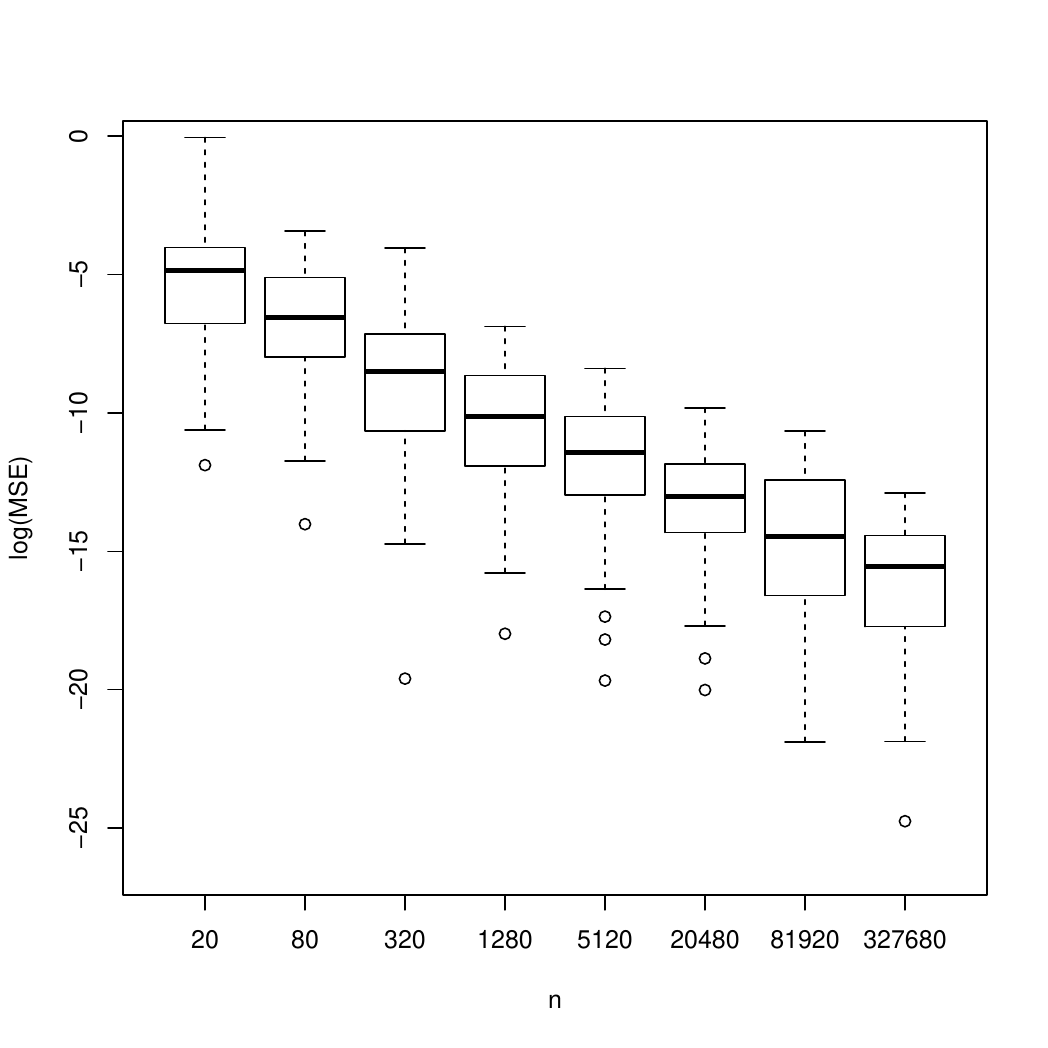}

	   \caption{
	   (a)Boxplot of MSEs of MLE against $n$ in log-scale, for
	   datasets without outlier noise.
	   (b)Boxplot of MSEs of proposed estimator against $n$ in log-scale, for
	   datasets without outlier noise.
	   }
       \label{fig.robust1}
	  \end{center}
	 \end{figure}
	  	 \begin{figure}[ht]
		  \vspace{-3mm}
	  \begin{center}

\includegraphics[width=.49\columnwidth,bb=0 0 503 503]{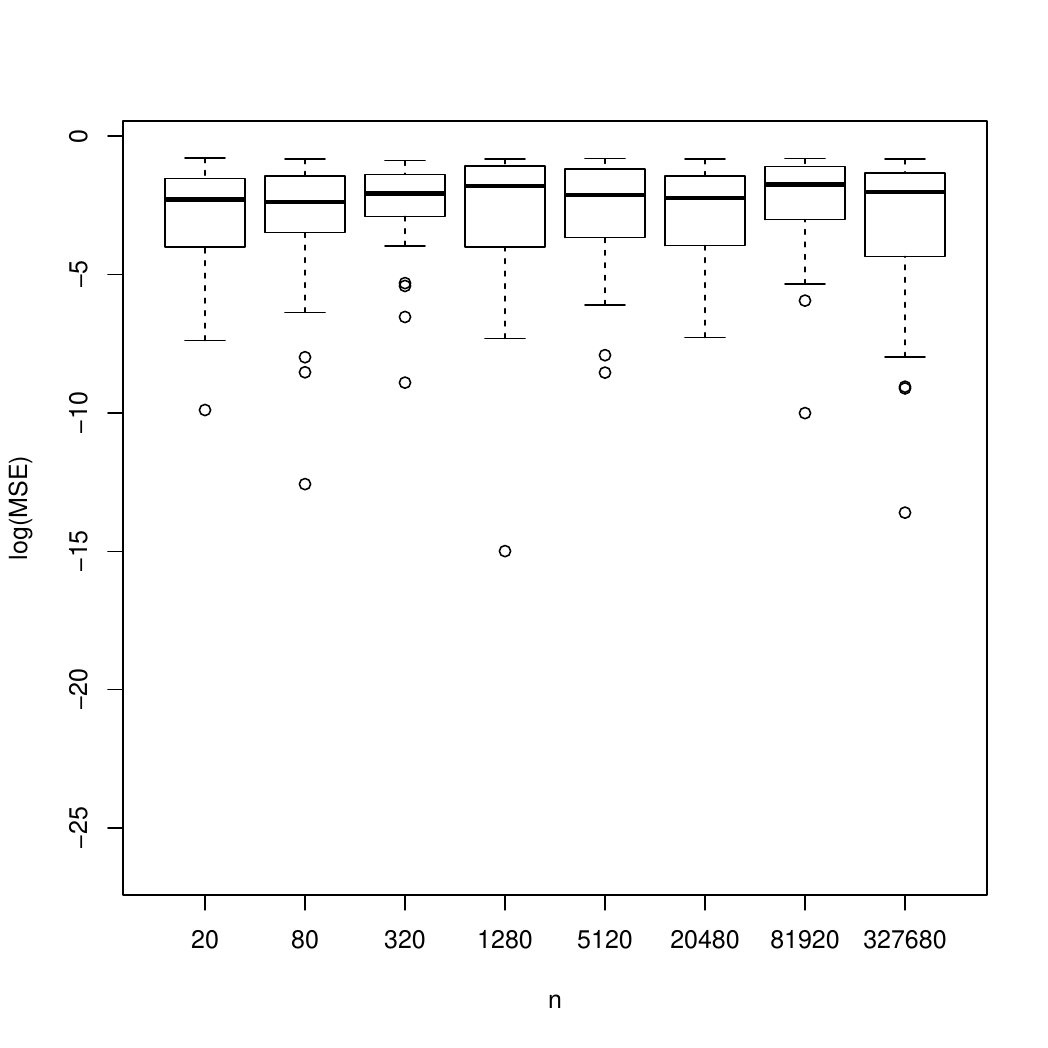}
\includegraphics[width=.49\columnwidth,bb=0 0 503 503]{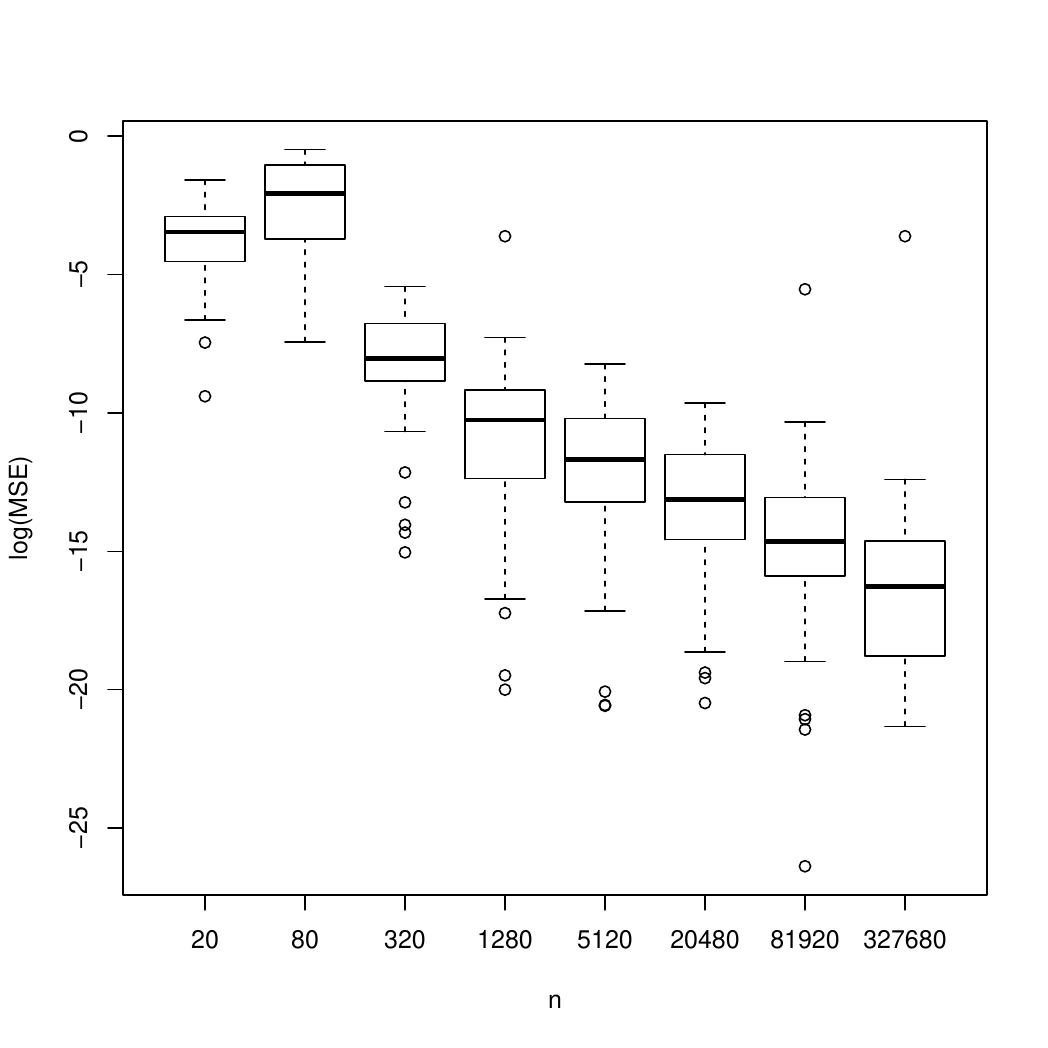}

	   \caption{
	   (a)Boxplot of MSEs of MLE against $n$ in log-scale, for
	   datasets with outlier noise.
	   (b)Boxplot of MSEs of proposed estimator against $n$ in log-scale, for
	   datasets with outlier noise.
	   }
       \label{fig.robust2}
	  \end{center}
	 \end{figure}

    	  \section{Conclusions}
	  We proposed the estimator for probabilistic
    	  models on discrete space.
	  The proposed estimator is derived from
	  the combination of the deformed Bregman
	  divergence and the empirically localization of model.
 	  The empirically localized model makes it possible to construct the computationally
    	  feasible estimator which does not require the calculation of
    	  the normalization constant.
	  In addition, we show that the specific form of deformation 
	  ensures asymptotic efficiency or robustness against outlier noise for
 	  the proposed estimator and
	  numerically confirm performance of the proposed estimator.

\section*{Acknowledgments}
This work was supported by JSPS KAKENHI Grant Number 
26K14740.

\onecolumn

 \appendix

\section*{Appendix A: Proof of Lemma \ref{lemma.1}}\label{app.a}
    Let us assume that the empirical distribution is written as
    $\tilde{p}(\V{x})=\bq_{\vtheta_0}(\V{x})+\epsilon s(\V{x})$
    where $\epsilon$ is a small constant and $s(\V{x})$ is a function
     satisfying $\br{s}=0$.
     By expanding
     \eqref{equiv.condition}
     around $\vtheta=\vtheta_0$,
    we obtain   
     \begin{align}
      0\simeq &
      \left.
      \frac{\partial }{\partial \vtheta}
      D(\tra,\trad;U,f)
      \right|_{\vtheta=\vtheta_0}+
      \left.\frac{\partial^2}{\partial \vtheta\partial \vtheta^T}
      D(\tra,\trad;U,f)\right|_{\vtheta=\vtheta_0}
      (\vhtheta-\vtheta_0).
     \label{app1.eq}
     \end{align}         
     By the delta method
     \cite{vandervaart1998as},
     we observe that
     the first term can be further expanded as follows.
     \begin{align*}
      &\left.
      \frac{\partial }{\partial \vtheta}
      D(\tra,\trad;U,f)
      \right|_{\vtheta=\vtheta_0}\\
      \\ 
      =& \left<
      U'(f(\trad))f'(\trad)\frac{\partial \trad}{\partial \vtheta}
      -U'(f(\tra))f'(\tra)\frac{\partial \tra}{\partial \vtheta}     
      \right. \\
      & 
       -U''(f(\tra))f'(\tra)\frac{\partial\tra}{\partial \vtheta}(f(\trad)-f(\tra))
      \\
      &\left.\left.
      -U'(f(\tra))\left(f'(\trad)\frac{\partial\trad}{\partial \vtheta}
      -f'(\tra)\frac{\partial \tra}{\partial \vtheta}\right)
      \right>
      \right|_{\vtheta=\vtheta_0}
      \\
      =&
      \left.
      \frac{\partial }{\partial \vtheta}
      D(\tra,\trad;U,f)
      \right|_{\vtheta=\vtheta_0,\epsilon=0}      
      +
      \epsilon \left.
      \frac{\partial }{\partial \epsilon}\frac{\partial }{\partial \vtheta}
      D(\tra,\trad;U,f)
      \right|_{\vtheta=\vtheta_0,\epsilon=0}      
      +\mathcal{O}(\epsilon^2)
      \\
      =&
      0
      +
      \epsilon \left.
      \frac{\partial }{\partial \epsilon}\frac{\partial }{\partial \vtheta}
      D(\tra,\trad;U,f)
      \right|_{\vtheta=\vtheta_0,\epsilon=0}      
      +\mathcal{O}(\epsilon^2)
      \end{align*}
      where
      \begin{align*}
       &\frac{\partial }{\partial \epsilon}\frac{\partial }{\partial \vtheta}
       D(\tra,\trad;U,f)
       \\
       =&
      \left<
      \left\{
      U''(f(\trad))f'(\trad)^2   
      \ptrad\petrad^T
       +U'(f(\trad))f''(\trad)     
      \ptrad\petrad^T      
      \right.
       \right.
      \\
      &
       +U'(f(\trad))f'(\trad)   \frac{\partial}{\partial \epsilon} \ptrad
       -
       U''(f(\tra))f'(\tra)^2   
       \ptra\petra^T
       \\
       &
       -U'(f(\tra))f''(\tra)     
       \ptra\petra^T
       -U'(f(\bqz))f'(\bqz)   \frac{\partial}{\partial \epsilon} \ptra
      \\
       &-
       \frac{\partial U''(f(\tra))f'(\tra)\ptra}{\partial \epsilon} (f(\trad)-f(\tra))
       \\
      &-
       U''(f(\tra))f'(\tra)\ptra \left(
      f'(\trad)\petrad^T-f'(\tra)\petra^T
      \right)      
      \\
      &
       -
      U''(f(\tra))f'(\tra)\left(
       f'(\trad)\ptrad-f'(\tra)\ptra
      \right)\petra^T
      \\
      &-
      U'(f(\bqz))\left(
       f''(\trad)\ptrad\petrad^T-f''(\tra)\ptra\petra^T
      \right)      
      \\
      &-
       \left. \left. 
       U'(f(\tra))
      \left(
       f'(\trad)\frac{\partial}{\partial \epsilon} \ptrad-
       f'(\tra)\frac{\partial}{\partial \epsilon} \ptra
      \right)
       \right\}  s 
       \right> 
      \end{align*}
      Note that $r_{\alpha,\vtheta_0}(\V{x})=\bar{q}_{\vtheta_0}(\V{x})$ regardless of value of $\alpha$ 
     and 
     \begin{align*}
\left.\frac{\partial \tra}{\partial \vtheta}\right|_{\vtheta=\vtheta_0,\epsilon=0}&=
      (1-\alpha)\bqz(\V{x})\left\{ \psi_{\vtheta_0}-\vmu_0\right\},
     \\
      \left.
      \frac{\partial \tilde{r}_{\alpha,\vtheta_0}}{\partial \epsilon}
      \right|_{\epsilon=0}
      &=\alpha
      \left\{
      \frac{\bar{q}_{\vtheta_0}(\V{x})^{\alpha-1}q_{\vtheta_0}(\V{x})^{1-\alpha}}{\br{\bar{q}_{\vtheta_0}(\V{x})^{\alpha}q_{\vtheta_0}(\V{x})^{1-\alpha}}}
      s(\V{x})
      -
      \bq_{\vtheta_0}(\V{x})
      \frac{\br{\bar{q}_{\vtheta_0}^{\alpha-1}q_{\vtheta_0}^{1-\alpha}s}}{\br{\bar{q}_{\vtheta_0}^{\alpha}q_{\vtheta_0}^{1-\alpha}}}
      \right\}
      \\
      &=
      \alpha
      \left\{
      s(\V{x})-\bq_{\vtheta_0}(\V{x})\br{s}
      \right\}
      \\
      &=
      \alpha s(\V{x})
      \end{align*}
      holds.  In the last line, we use $\br{s}=0$.
Then we observe that
      \begin{align}
       &
       \left.
       \frac{\partial }{\partial \epsilon}\frac{\partial }{\partial \vtheta}
       D(\tra,\trad;U,f)
       \right|_{\vtheta=\vtheta_0,\epsilon=0}
       \nonumber\\       
       =&
      \left<
       \left\{
      U''(f(\bqz))f'(\bqz)^2   
       \left.
       \ptrad\petrad^T
       \right|_{\vtheta=\vtheta_0,\epsilon=0}
      +U'(f(\bqz))f''(\bqz)     
       \left.
       \ptrad\petrad^T
       \right|_{\vtheta=\vtheta_0,\epsilon=0}
       \right.
       \right.
              \nonumber\\
      &
       +U'(f(\bqz))f'(\bqz)   
       \left.
       \frac{\partial}{\partial \epsilon} \ptrad      
       \right|_{\vtheta=\vtheta_0,\epsilon=0}
       -
       U''(f(\bqz))f'(\bqz)^2   
       \left.
      \ptra\petra^T
       \right|_{\vtheta=\vtheta_0,\epsilon=0}
              \nonumber\\ 
       &
       -U'(f(\bqz))f''(\bqz)     
       \left.
       \ptra\petra^T
       \right|_{\vtheta=\vtheta_0,\epsilon=0}
      -U'(f(\bqz))f'(\bqz)   
       \left.
       \frac{\partial}{\partial \epsilon} \ptra
       \right|_{\vtheta=\vtheta_0,\epsilon=0}
              \nonumber\\
      &-
       U''(f(\bqz))f'(\bqz)
       \left.
       \ptra
       \left(
       f'(\bqz)
       \petrad^T
       -f'(\bqz)
       \petra^T
      \right)
       \right|_{\vtheta=\vtheta_0,\epsilon=0}
              \nonumber\\
      &
       -
      U''(f(\bqz))f'(\bqz)^2
       \left.
       \left(
      \ptrad-\ptra
      \right)\petra^T
       \right|_{\vtheta=\vtheta_0,\epsilon=0}
              \nonumber\\
      &-
      U'(f(\bqz))f''(\bqz)
       \left.
       \left(
      \ptrad\petrad^T-\ptra\petra^T
      \right)      
       \right|_{\vtheta=\vtheta_0,\epsilon=0}
              \nonumber\\
      &-\left.
       \left.
      U'(f(\bqz))f'(\bqz)
       \left.
      \left(
      \frac{\partial}{\partial \epsilon} \ptrad-
      \frac{\partial}{\partial \epsilon} \ptra
      \right)
       \right|_{\vtheta=\vtheta_0,\epsilon=0}
       \right\}
       s
       \right>      
              \nonumber\\
       =&
       \br{
       U''(f(\bqz))f'(\bqz)^2
       \left.
       \left(
       \ptrad-\ptra
       \right)
       \left(       
       \petrad-\petra
       \right)
       \right|_{\vtheta=\vtheta_0,\epsilon=0}
       s
       }
              \nonumber\\
      =&- (\alpha-\alpha')^2
      \br{\xi_{U,f}(\bq_{\vtheta_0})(\psi_{\vtheta_0}'-\vmu_0)
      (\tilde{p}-\bq_{\vtheta_0})
      }
       \label{eq.d2}
       .
      \end{align}
     From the central limit theorem, 
     \begin{align*}      
      \sqrt{n}\br{\xi_{U,f}(\bq_{\vtheta_0})(\psi_{\vtheta_0}'-\vmu_0)
      (\tilde{p}-\bq_{\vtheta_0})
      }
      =&
      \frac{1}{\sqrt{n}}\nsum\zeta_{U,f,\vtheta_0}(\V{x}_i)
      \end{align*}
     asymptotically follows the normal  distribution with mean $\V{0}$
     and variance
     $J_{U,f,\vtheta_0}=\br{\bq_{\vtheta_0}\zeta_{U,f,\vtheta_0}\zeta_{U,f,\vtheta_0}^T}$.

      For the second term of \eqref{app1.eq}, we observe that
      \begin{align*}
       &
       \left.\frac{\partial^2}{\partial \vtheta\partial \vtheta^T}
      D(\tra,\trad;U,f)\right|_{\vtheta=\vtheta_0,\epsilon=0}
       \\ 
       =& \frac{\partial}{\partial \vtheta}\left<
      U'(f(\trad))f'(\trad)\frac{\partial \trad}{\partial \vtheta}^T
      -U'(f(\tra))f'(\tra)\frac{\partial \tra}{\partial \vtheta}^T
      \right. \\
      & 
       -U''(f(\tra))f'(\tra)\frac{\partial\tra}{\partial \vtheta}^T(f(\trad)-f(\tra))
       \\
       &
       \left.\left.
       -U'(f(\tra))\left(f'(\trad)\frac{\partial\trad}{\partial \vtheta}^T
      -f'(\tra)\frac{\partial \tra}{\partial \vtheta}^T\right)
      \right>
       \right|_{\vtheta=\vtheta_0,\epsilon=0}
       \\
       =& 
       \left<
       U''(f(\bqz))f'(\bqz)^2
       \left.
       \ptrad\ptrad^T
       \right|_{\vtheta=\vtheta_0,\epsilon=0}
       +U'(f(\bqz))f''(\bqz)
       \left.
       \ptrad\ptrad^T
       \right|_{\vtheta=\vtheta_0,\epsilon=0}
       \right. \\
       &
       +U'(f(\bqz))f'(\bqz)
       \left.
       \frac{\partial^2 \trad}{\partial\vtheta\partial \vtheta^T}
       \right|_{\vtheta=\vtheta_0,\epsilon=0}
       -U''(f(\bqz))f'(\bqz)^2
       \left.\ptra\ptra^T\right|_{\vtheta=\vtheta_0,\epsilon=0}
       \\
       &-U'(f(\bqz))f''(\bqz)
       \left.\ptra\ptra^T\right|_{\vtheta=\vtheta_0,\epsilon=0}
       -U'(f(\bqz))f'(\bqz)
       \left.\frac{\partial^2 \tra}{\partial\vtheta\partial \vtheta^T}\right|_{\vtheta=\vtheta_0,\epsilon=0}
       \\
       &
       -\left.\frac{\partial}{\partial\vtheta}
       \left(
       U''(f(\tra))f'(\tra)\frac{\partial\tra}{\partial \vtheta}^T
       \right)\right|_{\vtheta=\vtheta_0,\epsilon=0} (f(\bqz)-f(\bqz))
       \\
       &
       -
       U''(f(\bqz))f'(\bqz)
       \left.\left(f'(\bqz)\ptrad -f'(\bqz)\ptra\right)
       \frac{\partial\tra}{\partial \vtheta}^T\right|_{\vtheta=\vtheta_0,\epsilon=0}
       \\
      & 
       -U''(f(\bqz))f'(\bqz)^2
       \left.\ptra\left(\ptrad^T-\ptra^T
       \right)\right|_{\vtheta=\vtheta_0,\epsilon=0}
       \\
       &
       -U'(f(\bqz))f''(\bqz)
       \left.\left(\ptrad\ptrad^T-\ptra\ptra^T
       \right)\right|_{\vtheta=\vtheta_0,\epsilon=0}
       \\
       &
       \left.
       -U'(f(\bqz))f'(\bqz)
       \left.
       \left(
       \frac{\partial^2 \trad}{\partial\vtheta\partial \vtheta^T}
       -
       \frac{\partial^2 \tra}{\partial\vtheta\partial \vtheta^T}
       \right)
       \right|_{\vtheta=\vtheta_0,\epsilon=0}      
      \right>
       \\
       =&
       \br{
       \left.
       U''(f(\bqz))f'(\bqz)^2
       \left(\ptrad-\ptra\right)
       \left(\ptrad-\ptra\right)^T
       \right|_{\vtheta=\vtheta_0,\epsilon=0}      
       }
       \\
       =&
       (\alpha'-\alpha)^2\br{U''(f(\bqz))f'(\bqz)^2\bqz^2(\psi'_{\vtheta_0}-\vmu_0)
       (\psi'_{\vtheta_0}-\vmu_0)^T}
       \\
       =&
       (\alpha'-\alpha)^2H_{U,f,\vtheta_0}.
      \end{align*}
      From the law of large numbers, $\tilde{p}$ converges to $\bqz$, {\it i.e.},
      $\epsilon\to 0$ when $n\to \infty$, then we have
     \begin{align*}
      \left.\frac{\partial^2}{\partial \vtheta\partial \vtheta^T}
      D(\tra,\trad;U,f)\right|_{\vtheta=\vtheta_0}
       \to
      (\alpha'-\alpha)^2H_{U,f,\vtheta_0}      
     \end{align*}
     in the limit of $n\to \infty$.
     Taking the probabilistic error in the law of large numbers into
     account,
     we obtain an equality     
   \begin{equation*}
    \sqrt{n}(\vhtheta_{U,f}-\vtheta_0)
     =H_{U,f,\vtheta_0}      ^{-1}
     \frac{1}{\sqrt{n}}\nsum
     \zeta_{U,f,\theta_0}(\V{x}_i)
     +o_p(1).
   \end{equation*}
     Consequently, the asymptotic distribution of the estimator is given
     as
     $\sqrt{n}(\vhtheta_{U,f}-\vtheta_0)\sim
     N(0,V_{U,f})$.

	  \bibliographystyle{unsrt}

\end{document}